\setlist[enumerate]{nosep, topsep=1ex}
\setlist[itemize]{nosep, topsep=1ex}
\setlist[description]{nosep}
\newcommand{\alg}[1]{\textsc{\bfseries \footnotesize #1}}
\newcommand{\innerAlg}{\mathcal{A}}
\newcommand{\outerAlg}{\mathcal{B}}
\newcommand{\bigO}[1]{O \left( #1 \right)}
\newcommand{\littleO}[1]{o \left( #1 \right)}
\newcommand{\bigOmega}[1]{\Omega \left( #1 \right)}
\newcommand{\bigTheta}[1]{\Theta \left( #1 \right)}
\newtheorem{theorem}{Theorem}[section]
\newtheorem{definition}[theorem]{Definition}
\newtheorem{lemma}[theorem]{Lemma}
\newcommand{\R}{\mathbb{R}}
\newcommand{\F}{\mathbb{F}}
\newcommand{\class}{\mathcal{C}}
\newcommand{\domain}{\mathcal{X}}
\newcommand{\range}{\mathcal{Y}}
\newcommand{\partition}{\mathcal{P}}
\newcommand{\mechanism}{\mathcal{M}}
\newcommand{\given}{\textrm{\xspace s.t. \xspace}}
\newcommand{\ceil}[1]{\left\lceil #1 \right\rceil}
\newcommand{\floor}[1]{\left\lfloor #1 \right\rfloor}
\newcommand{\set}[1]{\{#1\}}
\newcommand{\wt}{\mathsf{wt}}
\newcommand{\diag}{\mathrm{diag}}
\newcommand{\numberthis}{\addtocounter{equation}{1}\tag{\theequation}}
\newcommand{\squareTiling}{\alg{SquareTilingAttention}}
\newcommand{\matrixEntryCompression}{\mathbf{MatrixEntryCompression}}
\newcommand{\oneWayCC}{\mathbf{OneWayCC}}
\newcommand{\bch}[2]{\mathrm{BCH}[#1, #2]}
\newcommand{\softmax}{\mathrm{softmax}}
\title{The I/O Complexity of Attention, or \\ How Optimal is Flash Attention?}
\author{Barna Saha\thanks{University of California, San Diego. The authors are partially supported by NSF grants 1652303, 1909046, 2112533, and HDR TRIPODS Phase II grant 2217058.} \and Christopher Ye\footnotemark[1]}
\date{\today}
\begin{document}

\maketitle

\begin{abstract} \small\baselineskip=9pt
    Self-attention is at the heart of the popular Transformer architecture, yet suffers from quadratic time and memory complexity. 
In a recent significant development, FlashAttention shows that the I/O complexity of attention is the true bottleneck in scaling Transformers.  
Given two levels of memory hierarchy, a fast cache (e.g. GPU on-chip SRAM) where computation happens and a slow memory (e.g. GPU high-bandwidth memory) where the data resides, the I/O complexity measures the number of accesses to the slow memory. 
FlashAttention is an I/O-aware algorithm for self-attention that requires $\frac{N^2d^2}{M}$ I/O operations where $N$ is the dimension of the attention matrix, $d$ is the head-dimension and $M$ is the size of cache. 
{\it However, is this I/O complexity optimal?} The known lower bound only rules out an I/O complexity of $o(Nd)$ when $M=\Theta(Nd)$, since the output of the attention mechanism that needs to be written in the slow memory is $\Omega(Nd)$. 
The main question that remained open after FlashAttention is whether this I/O complexity is optimal for any value of M.

We resolve the above question in its full generality by showing an I/O complexity lower bound that matches the upper bound provided by FlashAttention for any values of $M \geq d^2$ within any constant factors. 
Further, we give a better algorithm with lower I/O complexity for $M < d^2$, and show that it is optimal as well. 
Moreover, our lower bounds do not rely on using combinatorial matrix multiplication for computing the attention matrix. 
We show even if one uses fast matrix multiplication, the above I/O complexity bounds cannot be improved. 
We do so by introducing a new communication complexity protocol for matrix compression, and connecting communication complexity to I/O complexity. 
To the best of our knowledge, this is the first work to establish a connection between communication complexity and I/O complexity, and we believe this connection could be of independent interest and will find many more applications in proving I/O complexity lower bounds in future.

\end{abstract}

\newpage

\section{Introduction}
Transformer models \cite{DBLP:conf/nips/VaswaniSPUJGKP17} have emerged as the architecture of choice for a variety of applications including natural language processing and computer vision. 
The self-attention module at the heart of the Transformer architecture requires quadratic time and memory complexity.
Thus despite its popularity, Transformers are slow and memory-hungry.  
In a seminal paper, \cite{DBLP:conf/nips/DaoFERR22} proposes FlashAttention, an IO-aware algorithm for self-attention.
Given two levels of memory hierarchy, a fast cache (e.g. GPU on-chip SRAM) where computation happens and a slow memory (e.g. GPU high-bandwidth memory) where the data resides, the I/O complexity measures the number of accesses to the slow memory. 
\cite{DBLP:conf/nips/DaoFERR22} argues that I/O complexity is indeed the true bottleneck in achieving wall-clock speed up, and provide an algorithm that achieves an I/O complexity of $O(\frac{N^2d^2}{M})$ where $N$ is the dimension of the attention matrix, $d$ is the head-dimension and $M$ is the cache size. 
In self-attention, three input matrices $Q, K, V \in \R^{N \times d}$ are used to compute $\softmax(QK^T) V$ where $A = \exp(QK^T)$ is the attention matrix. 
For $M=\Theta(Nd)$, the above $I/O$ complexity bound becomes $Nd$ which is also needed just to write the output of self-attention to slow memory. 
This leaves a tantalizing open question whether FlashAttention I/O complexity is optimal for any values of $M$. 

We resolve this question in affirmative for any value of $M$ as long as $M \geq d^2$. 
Moreover, for $M < d^2$, we give a better algorithm than FlashAttention and also show the bound is optimal. 
Indeed as we prove for $M \leq d^2$, the I/O complexity of self-attention is at least as high as the I/O complexity of computing matrix multiplication of two $N \times d$ and $d \times N$ matrices. 
For $M \geq d^2$ which is also the more practical regime, our proof of optimality of FlashAttention brings in several new ideas.

First, we show that FlashAttention which uses combinatorial matrix multiplication has optimal I/O complexity by utilizing the famous {\it red-blue pebble game} from the early eighties \cite{redblue1981}. 
However, it may still be possible to improve FlashAttention by utilizing fast matrix multiplication \cite{DBLP:conf/soda/GallU18, DBLP:journals/corr/FMM} to obtain lower I/O complexity. 
We rule out any such possibilities. We prove a general lower bound against all such algorithms. 
We introduce a new matrix compression problem, and show that it has a high communication complexity. 
Next, we establish a connection between the communication complexity of a matrix compression protocol with the I/O complexity of attention. 
To the best of our knowledge, this is the first work that shows a connection between communication complexity and I/O complexity, which may have further valuable theoretical consequences.
We show even when the input matrix entries are restricted to binary, the lower bound holds within polylogarithmic factors by utilizing Binary BCH codes \cite{hocquenghem1959codes, DBLP:journals/iandc/BoseR60a} from coding theory to prove lower bounds against binary matrix compression.

%Highlights

%\begin{enumerate}
%    \item Tight characterization of attention with standard MM
%    \item (Almost) tight lower bounds for large cache attention for arbitrary algorithms.
%    \item First? instance of using compression to prove I/O complexity lower bounds against arbitrary algorithms?
%\end{enumerate}

%\chris{tried to add a summary in the Conclusion, feel free to change.}

\subsection{Related Work}

Many papers have studied attention, a key bottleneck of the transformer architecture \cite{DBLP:conf/nips/VaswaniSPUJGKP17} which has become the predominant architecture in applications such as natural language processing.
Many approximate notions have been studied to reduce its compute and memory constraints \cite{DBLP:conf/nips/BrownMRSKDNSSAA20,   DBLP:conf/icml/KatharopoulosV020, DBLP:conf/iclr/KitaevKL20,  DBLP:conf/nips/ZaheerGDAAOPRWY20, DBLP:conf/nips/ChenDWSRR21, DBLP:conf/iclr/ChoromanskiLDSG21, DBLP:journals/corr/BoundedEntryAttention, DBLP:journals/corr/HyperAttention}.
Algorithms such as FlashAttention have made significant practical gains by designing I/O-aware algorithms \cite{DBLP:conf/nips/DaoFERR22, DBLP:journals/corr/abs-2307-08691}.

The role of learning under bounded space constraints has been studied primarily in the context of bounded memory, with applications including Online Learning \cite{DBLP:conf/stoc/SrinivasWXZ22, DBLP:conf/focs/PengR23, DBLP:conf/soda/PengZ23}, continual learning \cite{DBLP:conf/focs/ChenPP22}, convex optimization \cite{DBLP:conf/colt/WoodworthS19, DBLP:conf/colt/MarsdenSSV22, DBLP:conf/focs/ChenP23}, and others \cite{DBLP:journals/jacm/Raz19, DBLP:conf/stoc/SharanSV19, DBLP:conf/nips/GonenLM20}.
In this work, we instead consider learning in the context of a \emph{two-level} memory hierarchy with a \emph{bounded cache}.
While the algorithm has unlimited memory, it can only access a small portion of this memory at any given time, and is charged for every access to memory.
We believe that the task of minimizing I/O complexity is both theoretically interesting and practically significant, as I/O operations often form a key computational bottleneck in practice.

There is a long-line of work on I/O complexity \cite{redblue1981, DBLP:journals/cacm/AggarwalV88, DBLP:conf/pods/PaghS14}. 
Within this body of work, many papers have focused on the problem of matrix multiplication \cite{DBLP:journals/jacm/BallardDHS12, DBLP:conf/medalg/BallardDHLS12, DBLP:conf/esa/PaghS14, DBLP:conf/spaa/ScottHS15, DBLP:conf/wads/BilardiS17}.

\subsection{Our Contributions}

We study the I/O complexity of attention on a two-level memory hierarchy.
The attention mechanism takes inputs $Q, K, V \in \R^{N \times d}$ and computes $\softmax(QK^T) V$ (see Section \ref{sec:prelims:attention}) where $A = \exp(QK^T)$ is the attention matrix.
Our first result resolves the I/O complexity of any algorithm computing attention using the standard matrix multiplication algorithm.
This answers an open question of \cite{DBLP:conf/nips/DaoFERR22} and shows that FlashAttention is optimal among this class of algorithms.

\begin{restatable*}{theorem}{standardMMAttention}
    \label{thm:standard-mm-attention-mechanism-i/o}
    The I/O complexity of attention with standard matrix multiplication is \begin{equation*}
        \bigTheta{\min\left( \frac{N^2 d}{\sqrt{M}}, \frac{N^2 d^2}{M} \right)}
    \end{equation*}
\end{restatable*}

We divide the analysis into two cases, the large cache ($M \geq d^2$) and small cache ($M < d^2$) case.
When $M < d^2$, we show that attention and matrix multiplication are equivalent. 
In the small cache setting, $\frac{N^2 d^2}{M} > \frac{N^2 d}{\sqrt{M}} > N^2$, so we can explicitly write $QK^T$ to memory while computing attention.
Thus, any algorithm computing attention in fact computes $QK^T$, establishing an equivalence between attention and matrix multiplication.

In the more interesting large cache case, when $M \geq d^2$, the upper bound is given by the breakthrough FlashAttention algorithm \cite{DBLP:conf/nips/DaoFERR22}.
In this setting, the I/O complexity approaches $O(N d)$ as cache size increases, providing significant practical improvements, despite the theoretical time complexity remaining $O(N^2 d)$.
To prove a matching lower bound, we use the red-blue pebble game of \cite{redblue1981}.
Executing an algorithm $\innerAlg$ on a machine with cache size $M$ is equivalent to playing the red-blue pebble game on the computational directed acyclic graph $G$ corresponding to $\innerAlg$.
Specifically, the partitioning lemma (Lemma \ref{lemma:m-partition-i/o-lb}) states that any successful execution of $\innerAlg$ corresponds to a partition of $G$ where each part corresponds to a sub-computation of $\innerAlg$ with no I/O operations (called a $M$-partition).
Thus, it suffices to prove a lower bound on the size of any $M$-partition to obtain a I/O complexity lower bound for $\innerAlg$ on a machine with cache size $M$.
Our lower bound on the partition size then follows from a careful analysis of the computational graph of attention (Figure \ref{fig:flash-attention-2-graph}).
Specifically, we show that any part $V_i$ in the partition can only compute $\frac{M^2}{d^2} + M \leq \frac{2 M^2}{d^2}$ entries in the product $QK^T$.

We then extend our lower bound in the large cache ($M \geq d^2$) case to arbitrary algorithms for attention.
More precisely, we show that even with fast matrix multiplication (FMM), $\bigOmega{\frac{N^2 d^2}{M}}$ I/O operations are required to compute attention.

\begin{restatable*}{theorem}{largeFieldAttention}
    \label{thm:attention-i/o-lb}
    Suppose $Q, K \in \F_q^{N \times d}$ where $q > N$.
    The I/O complexity of attention (with any matrix multiplication algorithm) is $\bigOmega{\min\left(\frac{N^2 d^2}{M}, N^2 \right)}$.
\end{restatable*}

To establish a general lower bound, we can no longer reason about a specific computational graph.
Instead, we appeal to communication complexity, a useful tool for proving lower bounds for learning \cite{DBLP:conf/colt/DaganKS19, DBLP:conf/colt/KaneLMY19}, specifically space-bounded learning \cite{DBLP:conf/focs/ChenPP22}.
To the best of our knowledge, this is the first application of communication complexity to I/O complexity lower bounds.

For simplicity, assume that $\innerAlg$ performs I/O operations in batches of size $M$ (Lemma \ref{lemma:execution-epoch-partition} shows that this can be assumed without loss of generality).
Within each batch, $\innerAlg$ can only access the cache of size $M$, with no further I/O operations.
In order to give a lower bound on the number of batches, we argue that $\innerAlg$ cannot make too much progress in each batch.
In the context of attention, we measure progress as the number of entries computed in $QK^T$ (regardless of whether these entries are written to memory).

Formally, we introduce the $B$-entry matrix compression problem (Definition \ref{def:matrix-entry-compression}) and argue that any algorithm making too much progress on a given batch gives an efficient communication protocol (Theorem \ref{thm:i/o-compression-protocol}).
We then complete the argument by giving a lower bound on the communication complexity of the matrix compression problem (Lemma \ref{lemma:d-rank-cache-lb}).
Our lower bound follows from constructing input matrices satisfying strong linear independence constraints.
For large $q$, this is achieved by Vandermonde matrices.

Finally, we relax the constraint $q > N$ and consider the special case $q = 2$, i.e. $Q, K, V$ are binary matrices and obtain a lower bound tight up to polylogarithmic factors.
Our lower bound uses Binary BCH codes \cite{hocquenghem1959codes, DBLP:journals/iandc/BoseR60a} to construct matrices satisfying strong linear independence constraints.

\begin{restatable*}{theorem}{binaryAttention}
    \label{thm:attention-i/o-lb-binary}
    Suppose $Q, K \in \set{0, 1}^{N \times d}$.
    The I/O complexity of attention (with any matrix multiplication algorithm) is $\bigOmega{\min\left(\frac{N^2 d^2}{M \log^2 N}, N^2 \right)}$.
\end{restatable*}

In the small cache setting ($M < d^2$), Theorem \ref{thm:small-cache-fmm-equiv} gives a simple equivalence between attention and rectangular matrix multiplication.

\section{Preliminaries}

Given a matrix $A \in \R^{n \times m}$, we index an individual entry as $A[i, j]$. 
The $i$-th row is denoted $A[i]$ while the $j$-th column is denoted $A[*, j]$.
$A[i_1:i_2, j_1:j_2]$ denotes a block of $A$ consisting of entries $(i, j)$ where $i \in [i_1, i_2]$ and $j \in [j_1, j_2]$.
Given a block size $B$, the block $A[(i - 1) \cdot B + 1: i \cdot B, (j - 1) \cdot B + 1:j \cdot B]$ is denoted $A^{(B)}[i, j]$.

$A^T, A^{-1}$ denote the transpose and inverse of $A$.

For a vector $v \in \R^{n}$, we similarly denote entries $v[i]$, a contiguous block of entries as $v[i_1:i_2]$, and the $i$-th block of size $B$ as $v^{(B)}[i]$.
Let $\diag(v)$ denote the matrix $D \in \R^{n \times n}$ with $D[i, i] = v[i]$.

\subsection{The Attention Mechanism}
\label{sec:prelims:attention}

Given input matrices $Q, K, V \in \R^{N \times d}$, the attention mechanism computes $D^{-1} A V$ where the attention matrix $A = \exp(QK^T)$ is computed by taking exponents entry-wise and $D = \diag(A \cdot \mathbf{1})$ is the diagonal matrix containing row-sums of $A$.

\subsection{The Memory Hierarchy}

In this work, we assume that the memory hierarchy has two levels: the small but fast layer (called the \emph{cache}) and the large slow layer (called the \emph{memory}).
We assume that the memory is unbounded while the cache is bounded by some size constraint $M$.
Furthermore, computations only occur on the cache.

\subsection{Red-Blue Pebble Game}

We require the red-blue pebble game of \cite{redblue1981}, designed to model computations on a two-level memory hierarchy.
Inspired by the pebble game often used to model space-bounded computation, \cite{redblue1981} develops the red-blue pebble game to model I/O complexity.
As with the standard pebble game, the red-blue pebble game is played on a directed acyclic graph, where nodes represent computations and edges represent dependencies.
Throughout the game, pebbles can be added to, removed from, or recolored between red and blue in the graph, where red pebbles represent data in cache and blue pebbles represent data in slow memory.
Given an upper bound on the number of red pebbles (cache size), the goal is to minimize the number of pebble recolorings (I/O operations) over the course of a computation.
The game is defined formally below.

\begin{definition}
    [Red-Blue Pebble Game~\cite{redblue1981}]
    Let $G$ be a directed acyclic graph with a set of \emph{input} vertices containing at least all vertices with no parents, and a set of \emph{output} vertices containing at least all vertices with no children.
    A configuration is a pair of subsets of vertices, one containing all vertices with red pebbles, and the other containing all vertices with blue pebbles.
    Note that a vertex can have a red pebble, a blue pebble, both, or neither.

    The \emph{initial} (resp. \emph{terminal}) configuration is one in which only input (resp. output) vertices contain pebbles, and all of these pebbles are blue.
    The rules of the red-blue pebble game are as follows:
    \begin{enumerate}[start=1,label={\bfseries R\arabic*}]
        \item (Input) A red pebble may be placed on any vertex with a blue pebble.
        \label{rb-pebble-rule:input}
        \item (Output) A blue pebble may be placed on any vertex with a red pebble.
        \label{rb-pebble-rule:output}
        \item (Compute) A red pebble may be placed on a vertex if all its parents have red pebbles.
        \label{rb-pebble-rule:compute}
        \item (Delete) A pebble may be removed from any vertex.
        \label{rb-pebble-rule:delete}
    \end{enumerate}
    A \emph{transition} is an ordered pair of configurations where the second can be obtained from the first following one of the above rules.
    A \emph{calculation} is a sequence of configurations, where each successive pair forms a transition.
    A \emph{complete calculation} is a calculation that begins with the initial configuration and ends with the terminal configuration.
\end{definition}

In typical instances, including our paper, the input and output vertices are disjoint. 
Furthermore, the input vertices are typically exactly those with no parents, while the output vertices are exactly those with no children.
To model a bounded cache, we assume that there are at most $M$ red pebbles on the graph at any given time, while any number of blue pebbles can be placed on the graph.
Given a graph $G$, we are interested in the I/O complexity.

\begin{definition}[I/O Complexity~\cite{redblue1981}]
    \label{def:io-complexity}
    Given a graph $G$ and integer $M$, the I/O complexity $Q(G, M)$ is defined by the minimum number of transitions according to rules \ref{rb-pebble-rule:input} and \ref{rb-pebble-rule:output} required by any complete calculation.
    When the underlying graph $G$ is clear, we omit $G$ and write $Q(M)$.
\end{definition}

To obtain I/O complexity lower bounds, \cite{redblue1981} show that any complete calculation induces a $M$-partition.
Roughly speaking, a $M$-partition partitions the vertex set such that each part can be computed completely using only $2M$ I/O operations.
A lower bound on the size of any $M$-partition then implies a lower bound on the I/O complexity of the computational graph.

Before defining the $M$-partition, we give the necessary definitions of dominator sets, minimum sets, and vertex subset dependence.

\begin{definition}[Dominator Set]
    Let $G$ be a directed acylic graph and $S \subset V$ a subset.
    $D \subset V$ is a \emph{dominator set} of $S$ if for every path from an input vertex of $G$ to a vertex in $S$ contains a vertex in $D$.
    \label{def:dominator-set}
\end{definition}

\begin{definition}[Minimum Set]
    Let $G$ be a directed acylic graph and $S \subset V$ a subset.
    The \emph{minimum set} of $S$, denoted $M$, is the subset of vertices in $S$ with no children in $S$.
    \label{def:minimum-set}
\end{definition}

\begin{definition}[Vertex Subset Dependence]
    \label{def:cyclic-dependence}
    Let $G$ be a directed acyclic graph, with $S, T \subset V$ disjoint subsets.
    $T$ \emph{depends} on $S$ if there exists an edge $(s, t) \in E$ with $s \in S, t \in T$.
\end{definition}

We now give the definition of an $M$-partition (called an $S$-partition in \cite{redblue1981}).

\begin{definition}[$M$-partition~\cite{redblue1981}]
    \label{def:m-partition}
    Let $G$ be a directed acyclic graph. 
    A family of subset $\set{V_i}_{i = 1}^{h}$ is a \emph{$M$-partition} of $G$ if the following properties hold:
    \begin{enumerate}[start=1,label={\bfseries P\arabic*}]
        \item (Partition) The sets $V_i$ are disjoint and $V = \bigcup_{i = 1}^{h} V_i$.
        \label{m-partition:partition}
        \item (Dominator) For each $V_i$, there exists a dominator set $D_i$ of size at most $M$.
        \label{m-partition:dominator}
        \item (Minimum) For each $V_i$, the set of minimum vertices $M_i$ has size at most $M$.
        \label{m-partition:minimum}
        \item (Acyclic) There is no cyclic dependence among vertex sets $\set{V_i}_{i = 1}^{h}$.
        \label{m-partition:acyclic}
    \end{enumerate}
\end{definition}

Roughly speaking, a $M$-partition partitions the vertex set such that each part can be computed completely using only $2M$ I/O operations.
In particular, every node in a single part $V'$ can be reached by placing red pebbles at the dominator vertices and using Rule \ref{rb-pebble-rule:compute}.
This can be thought of as the initial state of the cache at some point, and without any further reads from memory, every node on $V'$ can be computed.
Then, after the computation, the values at the minimum vertices can be written to memory.
This corresponds to at most $M$ write operations.
The following lemma relates I/O complexity to the size of $M$-partitions.

\begin{lemma}[Theorem 3.1 of \cite{redblue1981}]
    \label{lemma:m-partition-i/o-lb}
    Let $G$ be a directed acyclic graph.
    Any complete calculation of the red-blue pebble game on $G$, using at most $M$ red pebbles, is associated with a $2M$-partition of $G$ such that,
    \begin{equation*}
        M \cdot h \geq Q(G, M) \geq M \cdot (h - 1)
    \end{equation*}
    where $h$ is the number of vertex subsets in the $2M$-partition.
\end{lemma}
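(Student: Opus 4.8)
The plan is to run the argument of Hong and Kung~\cite{redblue1981}: from a complete calculation using at most $M$ red pebbles I extract a $2M$-partition of $G$, and then read the number of parts against the inequality, so that any lower bound on $2M$-partition size becomes an I/O lower bound. I apply the construction to a calculation that attains the I/O complexity, i.e.\ performs exactly $Q := Q(G,M)$ Input and Output transitions. Such a calculation contains no Input whose red pebble is never subsequently used (otherwise delete it and save an I/O transition, contradicting optimality), and after deleting the Compute steps whose result is never used (which costs no I/O) I may assume that every Compute and every Input in the calculation produces a value that is later consumed by a child's Compute or by an Output.

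\emph{Epochs and the counting bound.} Order the $Q$ I/O transitions chronologically and cut after every $M$-th of them, obtaining $h := \ceil{Q/M}$ consecutive sub-calculations (``epochs'') $\mathcal{C}_1, \dots, \mathcal{C}_h$, with exactly $M$ I/O transitions in each of $\mathcal{C}_1, \dots, \mathcal{C}_{h-1}$ and at most $M$ in $\mathcal{C}_h$. This already yields $M(h-1) \le Q \le Mh$.

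\emph{The partition.} Put a vertex $v$ in the class $V_i$ for which $\mathcal{C}_i$ is the epoch in which $v$ \emph{first} acquires a red pebble --- through a Compute if $v$ has parents, through an Input if $v$ is a source; these classes partition $V$. As a dominator of $V_i$ take $D_i := R_i \cup \mathrm{In}_i$, where $R_i$ is the set of vertices holding a red pebble at the instant $\mathcal{C}_i$ begins and $\mathrm{In}_i$ the set of vertices that receive a red pebble by an Input during $\mathcal{C}_i$; then $|D_i| \le M + M = 2M$. This $D_i$ is a dominator because, tracing backwards along any path from a source of $G$ to $v$ starting at the first Compute of $v$, each red pebble met on a predecessor was either already present when $\mathcal{C}_i$ began (its vertex lies in $R_i$), placed by an Input in $\mathcal{C}_i$ (its vertex lies in $\mathrm{In}_i$), or placed by a Compute in $\mathcal{C}_i$, in which case I recurse at that vertex's parents; since the path is finite, this either meets a vertex of $R_i \cup \mathrm{In}_i$ or reaches the source, which --- being pebbled only by an Input --- also lies in $R_i \cup \mathrm{In}_i$. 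For acyclicity, an edge $v \to c$ forces $v$ to be pebbled no later than $c$, so in the dependence relation on $\{V_i\}$ every arc runs from a class of index at most that of its head, which forbids cycles among distinct classes. For the minimum set, if $v \in V_i$ has no child in $V_i$ then, by that ordering, every child of $v$ is first pebbled strictly after $\mathcal{C}_i$, so the value first produced at $v$ --- which, by the normalization, is consumed --- cannot be consumed by a child inside $\mathcal{C}_i$; hence it is consumed by an Output during $\mathcal{C}_i$, or only later, and in the latter case the pebble placed on $v$ in $\mathcal{C}_i$ must survive to the end of $\mathcal{C}_i$. Thus the minimum set of $V_i$ is contained in (vertices holding a red pebble at the end of $\mathcal{C}_i$) $\cup$ (vertices Output during $\mathcal{C}_i$), of size at most $M + M = 2M$.

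Collecting the four properties, $\{V_i\}_{i=1}^{h}$ is a $2M$-partition with $h = \ceil{Q/M}$ parts, so $M(h-1) \le Q(G,M) \le Mh$; and if every $2M$-partition of $G$ has at least $h_0$ parts then, in particular, $h \ge h_0$, whence $Q(G,M) \ge M(h_0 - 1)$. I expect the main obstacle to be the minimum-set bound: one must rule out the scenario in which a value first produced in epoch $i$ but needed only later is silently dropped and recomputed --- or, for a source, re-read --- before it is ever used, so that its vertex lands in the minimum set of $V_i$ while belonging to neither the epoch's final cache nor its Outputs; this is precisely what the ``no useless Compute or Input'' normalization is designed to exclude. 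The remaining technical point is the bookkeeping for source vertices, together with the convention that a source can acquire a red pebble only through an Input, without which the backward recursion in the dominator argument need not bottom out inside $D_i$.
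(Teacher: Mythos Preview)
The paper does not supply a proof of this lemma --- it is quoted from Hong and Kung~\cite{redblue1981} and used as a black box --- so there is nothing in the paper to compare against. Your argument is the classical Hong--Kung construction and is correct: cut an optimal calculation into epochs of $M$ I/O transitions, assign each vertex to the epoch of its first red pebble, and bound the dominator and minimum sets of each $V_i$ by (start-cache $\cup$ Inputs) and (end-cache $\cup$ Outputs) of that epoch, each of size at most $2M$.

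Two small remarks. First, the lemma as worded says ``any complete calculation,'' but the inequality $Q(G,M)\ge M(h-1)$ can fail for a suboptimal one (since then $h=\lceil q/M\rceil$ with $q>Q(G,M)$ may be too large); your choice of an optimal calculation is exactly what makes both bounds hold with $Q(G,M)$, and this is also how the result is applied downstream in Lemma~\ref{lemma:partition-i/o-lb}. Second, both the acyclicity step and the minimum-set step implicitly use that a vertex with parents is first pebbled by a Compute, i.e.\ that input vertices are exactly the sources --- the paper explicitly adopts this convention right after Definition~\ref{def:io-complexity}, so the step is safe here. Your caveat about the normalization is well placed: it is precisely what rules out the ``compute, drop, recompute'' pathology in the minimum-set bound, and iterating the deletion of useless Computes cannot create a useless Input without contradicting optimality of the starting calculation.
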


In particular, the partitioning lemma of \cite{redblue1981} shows that it suffices to prove a lower bound on the size of any $2M$-partition to obtain a lower bound on the I/O complexity.

\begin{lemma}[Lemma 3.1 of \cite{redblue1981}]
    \label{lemma:partition-i/o-lb}
    For any directed acyclic graph $G$, let $P(G, M)$ denote the minimum size of any $M$-partition of $G$.
    Then,
    \begin{equation*}
        Q(G, M) \geq M \cdot (P(G, 2M) - 1)
    \end{equation*}
\end{lemma}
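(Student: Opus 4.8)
\emph{Proof plan.} The plan is to read the bound off Lemma~\ref{lemma:m-partition-i/o-lb}, which already carries all of the combinatorial weight; what remains is only to observe that the partition it produces cannot beat the optimal $2M$-partition. Concretely, I would fix a complete calculation of the red-blue pebble game on $G$ that uses at most $M$ red pebbles and attains the minimum possible number of input/output transitions (rules~\ref{rb-pebble-rule:input} and~\ref{rb-pebble-rule:output}), so that this number equals $Q(G, M)$. Applying Lemma~\ref{lemma:m-partition-i/o-lb} to this calculation yields a $2M$-partition $\set{V_i}_{i=1}^{h}$ of $G$ satisfying
\[
    Q(G, M) \;\geq\; M \cdot (h - 1).
\]
Since $P(G, 2M)$ is by definition the size of a smallest $2M$-partition of $G$ and $\set{V_i}_{i=1}^{h}$ is one particular such partition, we have $h \geq P(G, 2M)$, hence $h - 1 \geq P(G, 2M) - 1$, and therefore
\[
    Q(G, M) \;\geq\; M \cdot (h - 1) \;\geq\; M \cdot \bigl( P(G, 2M) - 1 \bigr),
\]
which is exactly the claim. (That a $2M$-partition of $G$ exists at all, so that $P(G,2M)$ is well-defined, follows from the standing assumption that $M$ is large enough to perform a compute step: the naive vertex-by-vertex calculation is then a complete calculation, and Lemma~\ref{lemma:m-partition-i/o-lb} converts it into a $2M$-partition.)

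So for the statement as phrased there is essentially nothing to do, and the real obstacle lives inside the assumed Lemma~\ref{lemma:m-partition-i/o-lb}; it is worth recalling the idea behind it. One slices the chosen calculation into consecutive \emph{epochs}, each containing exactly $M$ input/output transitions with the last one possibly shorter, giving $h = \lceil Q(G, M)/M \rceil$ epochs and hence $Q(G, M) \geq M(h - 1)$ for free. One then associates to epoch $i$ the set $V_i$ of vertices whose computation ``belongs'' to that epoch (the vertices first --- or, in an alternative formulation, last --- given a red pebble by a compute step, rule~\ref{rb-pebble-rule:compute}, during epoch $i$), with the input vertices of $G$ assigned to the first part, and checks that $\set{V_i}$ is a valid $2M$-partition. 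The dominator property (property~\ref{m-partition:dominator} with parameter $2M$) holds because the at most $M$ vertices carrying red pebbles when epoch $i$ begins, together with the at most $M$ vertices it loads from memory via rule~\ref{rb-pebble-rule:input}, cut every path from an input of $G$ into $V_i$: walking such a path backwards from a vertex of $V_i$, one reaches a vertex that holds a red pebble during epoch $i$ without being (re)computed within it, and such a vertex was necessarily pebbled before the epoch or loaded during it. The minimum-set bound of $2M$ (property~\ref{m-partition:minimum}) is charged against the red pebbles sitting on $V_i$ at the epoch boundary together with the epoch's I/O operations. Finally, acyclicity (property~\ref{m-partition:acyclic}, Definition~\ref{def:cyclic-dependence}) is immediate from the total time-order on epochs, since a compute step in epoch $i$ can only read vertices that are already pebbled by epoch $i$. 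Carrying these verifications out carefully against the four rules of the pebble game is the substance of Lemma~\ref{lemma:m-partition-i/o-lb}; granting that lemma, the present statement is just the two displayed inequalities above, and the main obstacle is therefore not in this proof but in the one we are allowed to cite.
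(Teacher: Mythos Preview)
Your derivation is correct, and it is exactly the intended one: the paper does not prove this lemma at all but merely cites it from \cite{redblue1981}, so there is no ``paper's own proof'' to compare against beyond the preceding Lemma~\ref{lemma:m-partition-i/o-lb}. Your two-line argument---apply Lemma~\ref{lemma:m-partition-i/o-lb} to an optimal complete calculation to get $Q(G,M)\geq M(h-1)$ for some $2M$-partition of size $h$, then use $h\geq P(G,2M)$---is precisely how the corollary is read off, and your sketch of the epoch-slicing proof of Lemma~\ref{lemma:m-partition-i/o-lb} matches the original Hong--Kung argument.
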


\subsection{Computational Graph for the Attention Mechanism}
\label{sec:attention-computational-graph}

Figure \ref{fig:flash-attention-2-graph} gives a computational graph modelling the attention mechanism \cite{DBLP:conf/nips/DaoFERR22, DBLP:journals/corr/abs-2307-08691}.
In this computational graph, we assume that matrix products are computed using the standard algorithm, that is, we compute $(AB)_{ij} = \sum_{k} A_{ik} B_{kj}$ for all $i, j$.

\begin{figure}[ht]
    \centering
    \includegraphics[width=0.9\columnwidth]{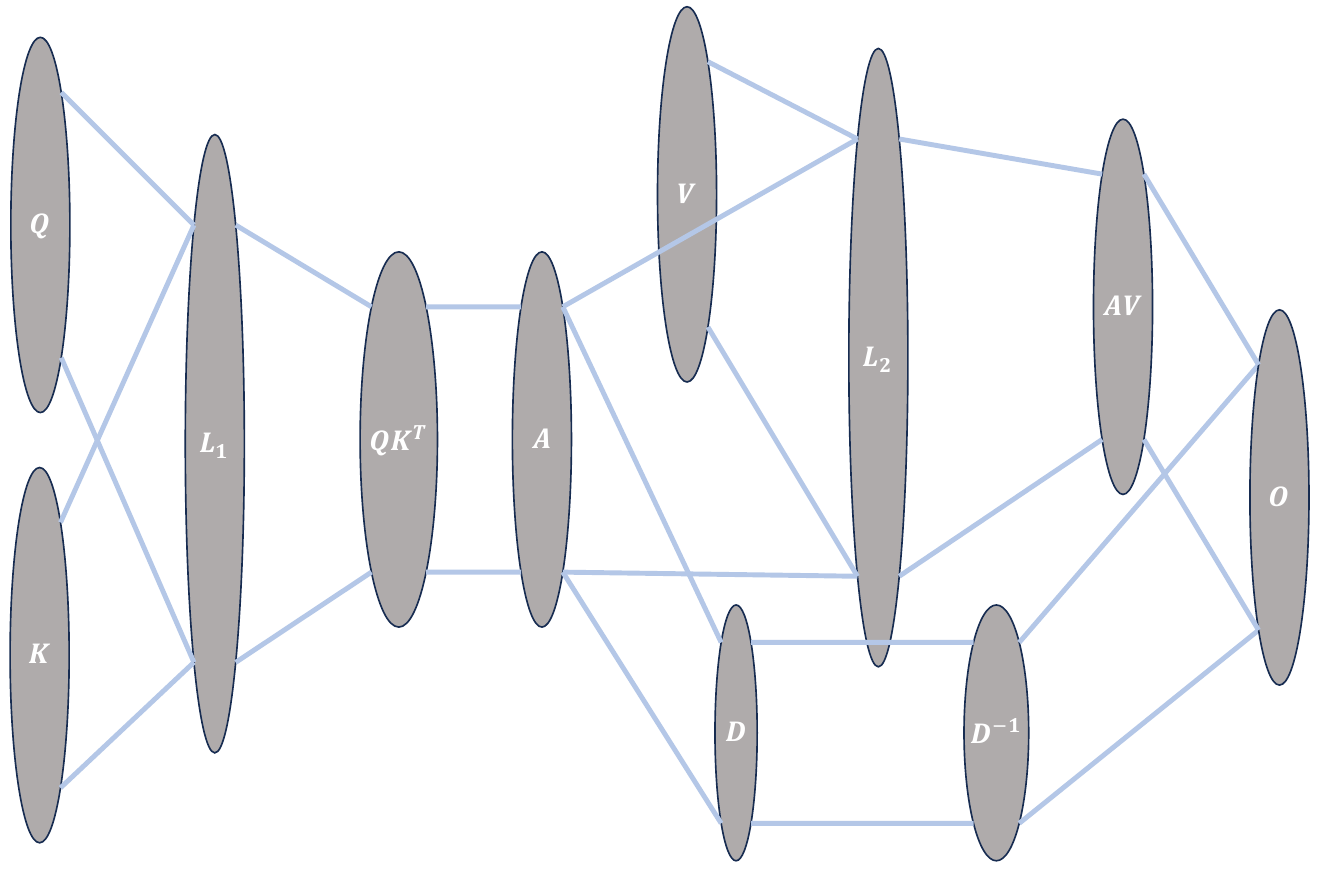}
    \caption{Computational Graph for the Attention Mechanism.}
    \label{fig:flash-attention-2-graph}
\end{figure}

The attention mechanism begins by computing the matrix product $QK^T$.
In the vertex set $L_1$, there are $N^2 d$ vertices representing the values $\set{Q_{i \ell} K^T_{\ell j}}_{i, j, \ell}$.
Then, each entry $(QK^T)_{ij}$ is computed by summing the appropriate vertices in $L_1$, i.e. the node $(QK^T)_{ij} = \sum_{\ell} Q_{i \ell} K^T_{\ell j}$.
In particular, each entry $(QK^T)_{ij}$ is connected to $L_1$ via a summation tree.
Note that all nodes in the summation trees are disjoint.
The summation tree can be thought of as a balanced binary tree with $d$ leaves, although the exact structure of the tree (i.e. order of summation) can be arbitrary. 
For a more detailed description of the computational graph of the standard matrix multiplication algorithm, see \cite{redblue1981}.
We define the set of \emph{level-1 vertices} to be all nodes in the computational graph that are in vertex sets $L_1, QK^T$, or one of the intermediate nodes in the $N^2$ summation trees between the two layers.
Figure \ref{fig:summation-tree} illustrates an example summation tree.

\begin{figure}[ht]
    \centering
    \includegraphics[width=0.6\columnwidth]{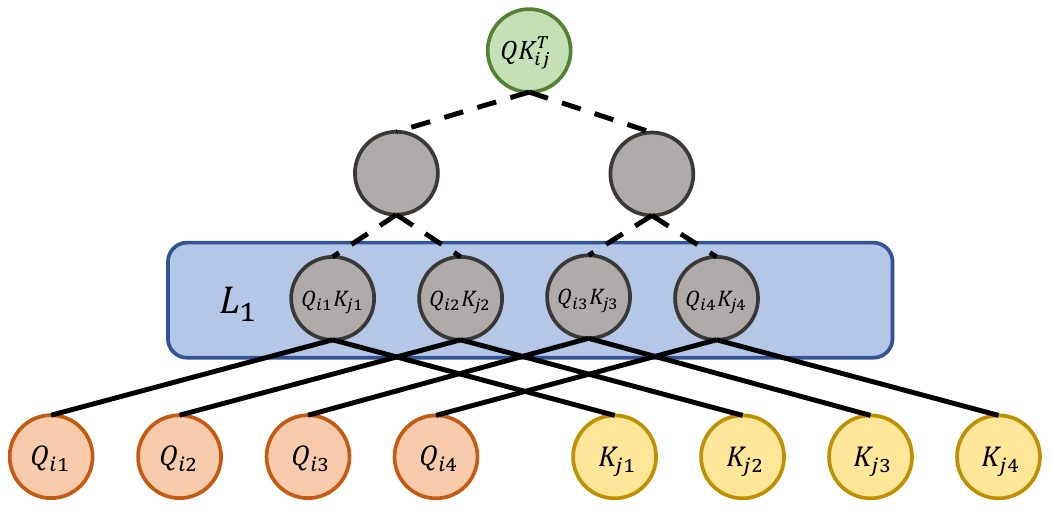}
    \caption{A single summation tree with $d = 4$. 
    Orange vertices denote inputs from $Q$. 
    Yellow vertices denote inputs from $K$.
    Grey and green vertices denote \emph{level-1 vertices}.
    Observe that these are disjoint for each summation tree.
    The green vertex specifically denotes an entry in $QK^T$.
    Solid edges denote multiplications and dotted edges denote additions.
    Vertices in the blue box denote vertices in $L_1$.}
    \label{fig:summation-tree}
\end{figure}

Then, each entry in $A$ is computed by taking the exponent of the corresponding entry in $QK^T$.
Each node in $D$ is computed by summing over the rows of $A$.
As above, this is realized in the graph by $N$ disjoint summation trees and $D^{-1}$ is computed by taking the multiplicative inverse of each element in $D$.
The matrix product is computed as in the first step, first by storing all $N^2 d$ products in the intermediate layer $L_2 = \set{A_{ik}V_{kj}}_{i, k, j}$ and computing $AV$ via $Nd$ disjoint summation trees.
Finally, each node in $O$ is computed by scaling each entry of $AV$ by the appropriate factor in $D^{-1}$.

Although we have (roughly) described the computational graph of FlashAttention-2 \cite{DBLP:journals/corr/abs-2307-08691} and there are different ways to implement attention, all algorithms begin by computing the product $QK^T$, and our lower bounds will hold for any algorithm computing this matrix product.

\section{I/O Complexity of Attention}
\label{sec:standard-mm-i/o-complexity}

In this section, we present a tight characterization of the I/O complexity of any algorithm computing attention exactly using the standard matrix multiplication algorithm.

\standardMMAttention

At the crossover point $M = d^2$, observe $\frac{N^2 d}{\sqrt{M}} = \frac{N^2 d^2}{M} = N^2$. 
We define $M \geq d^2$ as the \emph{large cache} regime, and $M < d^2$ as the \emph{small cache} regime.
We are primarily interested in the large cache regime, since this is where I/O complexity is sub-quadratic.
This is the regime where FlashAttention outperforms standard implementations of attention, since we can avoid writing the entire $N \times N$ matrix $QK^T$ to memory.

\subsection{Large Cache: \texorpdfstring{$M = \Omega(d^2)$}{}}
\label{sec:standard-large-cache}

For large $M = \Omega(d^2)$, we show that the following result of \cite{DBLP:conf/nips/DaoFERR22} is optimal in terms of I/O complexity.

\begin{theorem}[Theorem 2 of \cite{DBLP:conf/nips/DaoFERR22}]
    \label{thm:flash-attention}
    FlashAttention has I/O complexity $\bigO{\frac{N^2 d^2}{M}}$.
\end{theorem}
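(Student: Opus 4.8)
The plan is to present the FlashAttention algorithm of \cite{DBLP:conf/nips/DaoFERR22} as a blocked (tiled) schedule and bound its I/O directly. Fix block sizes $B_r, B_c$ (chosen below) and tile $Q$ into $T_r = \lceil N / B_r \rceil$ row blocks $Q^{(B_r)}[i]$, and $K, V$ into $T_c = \lceil N / B_c \rceil$ row blocks $K^{(B_c)}[j], V^{(B_c)}[j]$. For each row block $i$ keep, in slow memory, a running output block $O_i \in \R^{B_r \times d}$ and a running normalizer $\ell_i \in \R^{B_r}$, both initialized to zero. The schedule loops over column blocks $j = 1, \dots, T_c$ in the outer loop and row blocks $i = 1, \dots, T_r$ in the inner loop; in iteration $(j,i)$, with $K^{(B_c)}[j], V^{(B_c)}[j]$ already resident in cache, it loads $Q^{(B_r)}[i], O_i, \ell_i$, forms the score block $S = \exp\!\big(Q^{(B_r)}[i]\,(K^{(B_c)}[j])^T\big) \in \R^{B_r \times B_c}$ wholly in cache, updates $\ell_i \leftarrow \ell_i + S\mathbf{1}$ and $O_i \leftarrow O_i + S\,V^{(B_c)}[j]$ in cache, and writes $O_i, \ell_i$ back; a final pass replaces each $O_i$ by $\diag(\ell_i)^{-1} O_i$. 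The only correctness fact I would need is that after the last column block $\ell_i$ holds the row sums of $\exp(Q^{(B_r)}[i] K^T)$ and $O_i$ holds $\exp(Q^{(B_r)}[i] K^T) V$, so the final pass yields exactly the $i$-th row block of $D^{-1} A V$; this is an immediate telescoping of the additive updates. (The numerically stable variant of \cite{DBLP:conf/nips/DaoFERR22} additionally tracks a running row-maximum and rescales $O_i, \ell_i$; this reparametrization changes neither the output nor the I/O count.)

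The crux is the tile choice and the I/O count. In the large-cache regime $M = \Omega(d^2)$ I take $B_c = \Theta(M/d)$ and $B_r = \Theta(\min(M/d, d)) = \Theta(d)$, with hidden constants small enough that everything resident in an inner iteration fits in $M$ cells: $K^{(B_c)}[j], V^{(B_c)}[j]$ each use $\Theta(B_c d) = \Theta(M)$ cells, $Q^{(B_r)}[i], O_i$ each use $\Theta(B_r d) = \Theta(d^2) = O(M)$ cells, the score block $S$ uses $\Theta(B_r B_c) = \Theta(M)$ cells, and $\ell_i$ uses $O(B_r)$ cells — a constant number of $\Theta(M)$-sized blocks, which fit after fixing constants precisely because $d^2 = O(M)$. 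Now count: loading all column blocks over the whole run costs $\sum_{j=1}^{T_c} \Theta(B_c d) = \Theta(T_c B_c d) = \Theta(Nd)$; each of the $T_c T_r$ inner iterations does only cache-resident arithmetic and spends $\Theta(B_r d) = \Theta(d^2)$ I/O on the load/store of $Q^{(B_r)}[i], O_i, \ell_i$, for a total of $\Theta(T_c T_r B_r d) = \Theta\!\big(\tfrac{Nd}{M} \cdot \tfrac{N}{d} \cdot d^2\big) = \Theta\!\big(\tfrac{N^2 d^2}{M}\big)$; the final normalization pass plus the unavoidable reading of the input and writing of the output cost $\Theta(Nd)$. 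Since we may assume $M \le Nd$ (otherwise the whole instance fits in cache and $O(Nd)$ I/O suffices), $Nd \le \tfrac{N^2 d^2}{M}$, so the total is $O\!\big(\tfrac{N^2 d^2}{M} + Nd\big) = O\!\big(\tfrac{N^2 d^2}{M}\big)$.

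I expect the only genuine obstacle to be the tiling bookkeeping rather than any conceptual difficulty: one must verify that a constant number of $\Theta(M)$-sized blocks together with the $\Theta(d^2)$-sized blocks $Q^{(B_r)}[i], O_i$ simultaneously fit in $M$ cells for suitable constants, and this is exactly the constraint that pins $B_r = \Theta(d)$ and that makes the clean $N^2 d^2 / M$ bound emerge only once $M = \Omega(d^2)$. Two scheduling points deserve care. First, the inner loop must range over row blocks (not column blocks), so that $K^{(B_c)}[j], V^{(B_c)}[j]$ stay resident for an entire inner pass; this keeps their total I/O at $\Theta(Nd)$ rather than $\Theta(N^2 d^2/M)$. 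Second, it is precisely the repeated reloading of the accumulator $O_i$ — once per column block, i.e. $T_c = \Theta(Nd/M)$ times per row — that produces the dominant $\Theta(N^2 d^2/M)$ term.
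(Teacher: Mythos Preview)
The paper does not prove this statement; it is quoted verbatim as Theorem~2 of \cite{DBLP:conf/nips/DaoFERR22} and used only as the known upper bound against which the paper's new lower bounds are measured. Your proposal correctly reconstructs the FlashAttention algorithm and its I/O analysis---the block sizes $B_c = \Theta(M/d)$ and $B_r = \Theta(d)$, the outer loop over column blocks of $K,V$ so they are read once, and the identification of the repeated load/store of the accumulators $O_i$ (once per column block) as the source of the dominant $\Theta(N^2 d^2/M)$ term are exactly the content of the cited result, so there is nothing to compare against in this paper itself.
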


% Recall from Section \ref{sec:attention-computational-graph} that level-1 vertices are the vertices in $L_1, QK^T$ and the summation trees in between.
% Furthermore, Lemma \ref{lemma:m-partition-i/o-lb} \cite{redblue1981} states that any calculation on the computational graph induces a $2M$-partition of $G$.
To prove a matching lower bound, we bound the number of level-1 vertices in each part of a $M$-partition.
This gives a lower bound on the size of any partition, implying an I/O complexity lower bound via Lemma \ref{lemma:partition-i/o-lb}.

\begin{lemma}
    \label{lemma:part-internal-vertices-ub}
    Suppose $M = \Omega(d^2)$ and let $\partition$ be a $M$-partition of the computational graph in Figure \ref{fig:flash-attention-2-graph}. 
    Let $V' \in \partition$.
    Then, $V'$ contains at most $\bigO{\frac{M^2}{d}}$ level-1 vertices.
\end{lemma}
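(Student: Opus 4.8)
The plan is to bound the number of level-1 vertices in a single part $V'$ of an $M$-partition by analyzing how the dominator set $D'$ (of size at most $M$, by property \ref{m-partition:dominator}) and minimum set $M'$ (of size at most $M$, by property \ref{m-partition:minimum}) constrain the level-1 structure. Recall that the level-1 vertices are precisely the nodes in $L_1 = \{Q_{i\ell} K^T_{\ell j}\}$, the $N^2$ summation trees, and the entries of $QK^T$. Since every path from an input vertex ($Q_{i\ell}$ or $K_{\ell j}$) into a level-1 vertex must pass through $L_1$, and $L_1$ vertices have in-degree $2$, the dominator set $D'$ restricted to level-1 vertices can essentially be taken inside $L_1 \cup \{Q,K\text{ inputs}\}$; I will argue that at most $M$ of the $L_1$ vertices (or their input dependencies) are "available" to $V'$.

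The key combinatorial step is to count: if $V'$ can use at most $M$ pebbles worth of $L_1$-vertices and at most $M$ minimum vertices, how many distinct entries of $QK^T$ can be partially or fully computed, and how many summation-tree internal nodes does that entail? Here is where the hint in the overview ("any part $V_i$ can only compute $\frac{M^2}{d^2}+M \le \frac{2M^2}{d^2}$ entries in the product $QK^T$") becomes the anchor: a product entry $(QK^T)_{ij}$ needs all $d$ of its summands $Q_{i\ell}K^T_{\ell j}$, $\ell \in [d]$; if $V'$ touches the full summation trees for $T$ distinct entries, that requires roughly $Td$ distinct $L_1$-vertices, and since at most $\sqrt{M}$ distinct rows-indices $i$ and $\sqrt{M}$ column-indices $j$ can be "covered" by $M$ input pebbles in the relevant balanced way (via an AM–GM / rectangle argument à la the standard matrix-multiplication red-blue analysis of \cite{redblue1981}), we get $T = O(M^2/d^2)$. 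Each such entry contributes $O(d)$ internal summation-tree nodes plus its $d$ leaves, for $O(M^2/d^2 \cdot d) = O(M^2/d)$ level-1 vertices; partially computed trees along the "boundary" contribute the additive $O(M)$ term, which is absorbed since $M = \Omega(d^2)$ gives $M \le M^2/d$.

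The main obstacle I anticipate is handling level-1 vertices that are computed in $V'$ but whose *inputs* are not pebbled at the start — i.e., distinguishing what $V'$ genuinely computes internally from scratch versus what it merely passes through. The dominator property controls the former (only $M$ entry points), and the minimum-set property controls what leaves $V'$ downward, but a careful argument is needed to rule out a part that "threads" through enormously many summation trees while only ever holding $M$ pebbles at once — this is exactly the content of the rectangle/AM–GM bound, and I would import it essentially verbatim from the classical matrix-multiplication lower bound, adapting the accounting so that the $d$-wide summation trees (rather than single MAC operations) are the atomic unit. Once the bound on $QK^T$-entries touched is established, translating it to the $O(M^2/d)$ level-1 vertex count is a routine multiplication by the per-tree size $\Theta(d)$.
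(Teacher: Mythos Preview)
Your overall strategy matches the paper's: partition the level-1 summation trees touched by $V'$ into (i) those containing a dominator or minimum vertex of $V'$ --- at most $2M$ such trees, since the trees are vertex-disjoint --- and (ii) the remaining ``extra'' trees; bound the number of extra trees by a row/column argument on the inputs; then multiply by the $O(d)$ vertices per tree.

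The gap is in the central counting step. You assert that ``at most $\sqrt{M}$ distinct row-indices and $\sqrt{M}$ column-indices can be covered by $M$ input pebbles'' and then conclude $T=O(M^2/d^2)$; but $\sqrt{M}\cdot\sqrt{M}=M$, not $M^2/d^2$, so as written this does not follow. The correct argument, which the paper carries out, is: for an extra tree $T_{ij}$ (one containing neither a dominator nor a minimum vertex of $V'$), first the root $(QK^T)_{ij}$ must lie in $V'$ (otherwise following children inside the tree produces a minimum vertex), and second, since no dominator lies inside $T_{ij}$, every path from an input to that root forces all $2d$ input vertices $\{Q_{i\ell},K_{j\ell}\}_{\ell=1}^{d}$ into $D'$. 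These input vertices are \emph{shared} across trees --- $Q_{i\ell}$ serves every tree in row $i$ --- so if the extra trees span $I$ distinct rows and $J$ distinct columns, one needs $(I+J)d\le |D'|\le M$, whence the number of extra trees is at most $IJ\le (M/d)^2=M^2/d^2$. Your phrase ``requires roughly $Td$ distinct $L_1$-vertices'' conflates $L_1$ nodes (which are \emph{not} shared across trees, and whose appearance in $D'$ is already absorbed in the $2M$ boundary trees) with the input vertices in $Q,K$ (which \emph{are} shared, and which drive the $M^2/d^2$ bound). With the $\sqrt{M}\to M/d$ correction and the input-sharing argument made explicit, your sketch becomes the paper's proof.
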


\begin{proof}
    First, since $V'$ has a dominator set $D'$ of at most $M$, and each summation tree in the computational graph is disjoint, there are at most $M$ level-1 summation trees containing a dominator vertex.
    Next, since $V'$ has at most $M$ minimum vertices, disjointness again implies that at most $M$ level-1 summation trees contain a minimum vertex.
    See Figure \ref{fig:dominator-minimum-examples} for an example of $V'$ containing dominator and minimum vertices.

    \begin{figure}
        \centering
        \begin{subfigure}{0.45\columnwidth}
          \includegraphics[width=0.9\columnwidth]{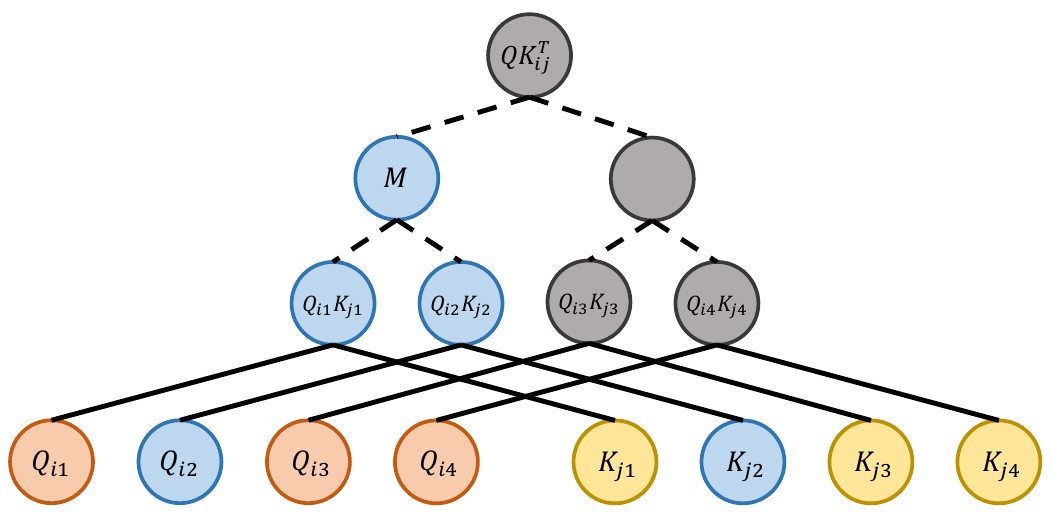}
          \caption{$\set{Q_{i2}, K_{j2}, Q_{i1}K_{j1}}$ are dominator vertices, $\set{M}$ is the minimum vertex.}
        \end{subfigure}
        \begin{subfigure}{0.45\columnwidth}
          \includegraphics[width=0.9\columnwidth]{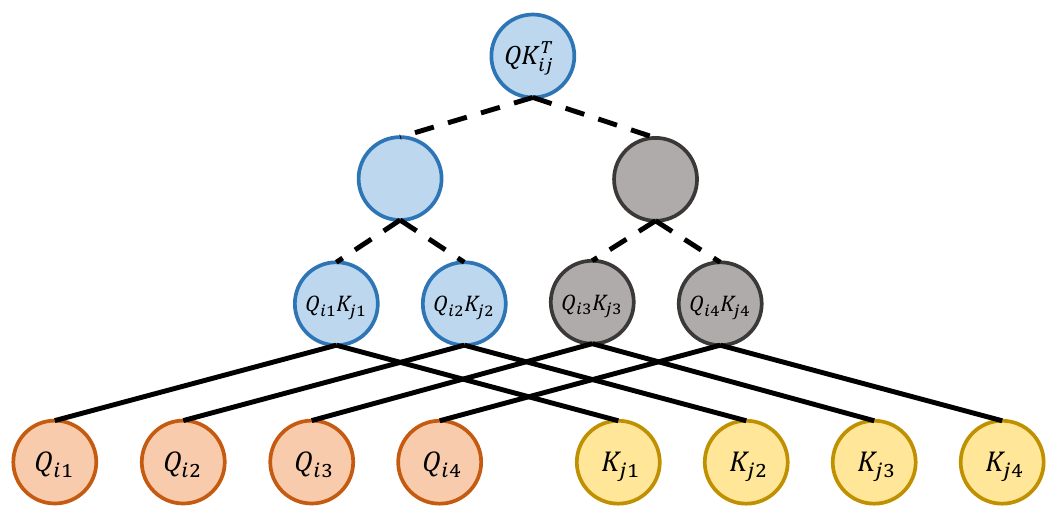}
          \caption{$\set{Q_{i1}K_{j1}, Q_{i2}K_{j2}}$ are dominator vertices, $\set{QK^T_{ij}}$ is the minimum vertex.}
        \end{subfigure}
        \caption{Two examples of summation trees containing dominator and minimum vertices.
        Blue vertices denote elements of the partition $V_i$.}
        \label{fig:dominator-minimum-examples}
    \end{figure}

    Suppose $V'$ contains level-1 vertices not in any of the above $2M$ summation trees (at most $M$ trees containing a dominator and another at most $M$ containing minimum vertices).
    
    Consider a tree $T$ containing level-1 vertices that does not intersect $D'$ or contain a minimum vertex.
    We denote such a tree as \emph{extra}.
    Since $T$ does not contain any minimum vertices, the root $r$ of $T$, representing some entry $(QK^T)[i, j]$, must be in $V'$.
    There are $2d$ input vertices with paths to $r$, namely the inputs $\set{Q[i, \ell], K^T[\ell, j]}_{\ell = 1}^{d}$.
    We denote the $i$-inputs of $T$ as the set $\set{Q[i, \ell]}_{\ell = 1}^{d}$ and the $j$-inputs as the set $\set{K^T[\ell, j]}_{\ell = 1}^{d}$.
    On each path, all vertices but the inputs are in $T$.
    Since $T$ contains no vertices in $D'$, $D'$ must contain all $2d$ input vertices.
    For example, in Figure \ref{fig:summation-tree}, $V'$ contains $(QK^T)[i, j]$ and none of the grey vertices, and therefore must contain all $2d = 8$ input vertices (orange and yellow).

    For two extra trees $T_{ij}, T_{i'j'}$ whose roots correspond to entries $(QK^T)[i, j], (QK^T)[i', j']$ respectively, observe that the $i$-inputs are equal if $i = i'$ and disjoint otherwise.
    Similarly, the $j$-inputs are equal if $j = j'$ and disjoint otherwise.
    Suppose $V'$ contains level-1 vertices in $C$ extra level-1 summation trees.
    The $C$ roots of these trees correspond to $C$ entries in the matrix $QK^T$.
    Suppose the $C$ entries have $I$ distinct values in the row index.
    Choose one entry in $C$ for each row index.
    Each entry requires $d$ $i$-input vertices to be placed in the dominator set, and furthermore these input vertices are disjoint.
    Therefore, there are at least $Id$ input vertices from $Q$ in the dominator set $D'$.
    By a similar argument, if the $C$ entries have $J$ distinct values in the column index, there are at least $Jd$ input vertices from $K$ in the dominator set $D'$.
    In particular, if there are $C$ extra trees, then there are at least $(I + J)d$ input vertices in the dominator set $D'$.
    Finally, we can bound $C$ by observing that $I + J \geq \max(I, J) \geq \sqrt{C}$ since $IJ \geq C$.
    In particular, 
    \begin{equation*}
        \sqrt{C} d \leq (I + J) d \leq |D'| \leq M
    \end{equation*}
    so that $C \leq \frac{M^2}{d^2}$.

    In total, $V'$ contains vertices in at most $C + 2M$ level-1 trees.
    Each level-1 tree contains $O(d)$ level-1 vertices so that $V'$ contains at most,
    \begin{equation*}
        \bigO{\frac{M^2}{d} + M d} = \bigO{\frac{M^2}{d}}
    \end{equation*}
    level-1 vertices, where we have used $M = \Omega(d^2)$.
\end{proof}

Using Lemma \ref{lemma:partition-i/o-lb}, we obtain a lower bound for computing attention using standard matrix multiplication.
In fact, since we have not used any other part of the computational graph, our lower bound holds for any algorithm computing $QK^T$ using standard matrix multiplication.

\begin{restatable}{lemma}{standardMMLargeCacheLB}
    \label{lemma:partition-size-lb-large-mem}
    Suppose $M = \Omega(d^2)$.
    Then, $P(M) = \bigOmega{\frac{N^2 d^2}{M^2}}$ and $Q(M) = \bigOmega{\frac{N^2 d^2}{M}}$.
\end{restatable}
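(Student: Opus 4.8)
The plan is a direct counting argument that marries Lemma \ref{lemma:part-internal-vertices-ub} with the partitioning lemma (Lemma \ref{lemma:partition-i/o-lb}). First I would count the total number of level-1 vertices in the computational graph of Figure \ref{fig:flash-attention-2-graph}: the layer $L_1$ alone contains the $N^2 d$ product vertices $\set{Q_{i\ell} K^T_{\ell j}}_{i,j,\ell}$, and all of these are level-1 vertices by definition, so the graph has at least $N^2 d$ level-1 vertices (no assumption on $M$ is needed for this).

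Next, fix an arbitrary $M$-partition $\partition$ of the graph. Since we are in the regime $M = \Omega(d^2)$, Lemma \ref{lemma:part-internal-vertices-ub} applies to every part $V' \in \partition$ and bounds the number of level-1 vertices in $V'$ by $\bigO{M^2/d}$. Because the parts are disjoint and cover all of $V$ (property \ref{m-partition:partition}), summing this bound over the $h$ parts yields $h \cdot \bigO{M^2/d} \geq N^2 d$, i.e. $h = \bigOmega{N^2 d^2 / M^2}$. As this holds for every $M$-partition, we conclude $P(M) = \bigOmega{N^2 d^2 / M^2}$.

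Finally, I would invoke Lemma \ref{lemma:partition-i/o-lb}, which gives $Q(M) \geq M \cdot (P(2M) - 1)$. Applying the bound on $P$ just established with $2M$ in place of $M$ — legitimate since $2M = \Omega(d^2)$ as well — gives $P(2M) = \bigOmega{N^2 d^2 / (2M)^2} = \bigOmega{N^2 d^2 / M^2}$, and hence $Q(M) \geq M \cdot \bigOmega{N^2 d^2/M^2} = \bigOmega{N^2 d^2 / M}$, which is the claimed bound.

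I do not expect a genuine obstacle here: the whole substance of the argument is already packaged in Lemma \ref{lemma:part-internal-vertices-ub}, and what remains is bookkeeping. The only two points worth double-checking are (i) that the asymptotic constant hidden in Lemma \ref{lemma:part-internal-vertices-ub} is uniform over the regime $M = \Omega(d^2)$, so that the substitution $M \mapsto 2M$ in the final step is valid, and (ii) that the lower bound $N^2 d$ on the number of level-1 vertices is unconditional (it is, since it comes only from $|L_1|$). With these in hand the chain of inequalities goes through with only constant-factor losses.
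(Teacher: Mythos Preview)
Your proposal is correct and follows essentially the same approach as the paper: count the $\Omega(N^2 d)$ level-1 vertices, divide by the per-part bound $O(M^2/d)$ from Lemma~\ref{lemma:part-internal-vertices-ub} to get $P(M)$, then plug into Lemma~\ref{lemma:partition-i/o-lb}. The paper's own proof is a two-sentence version of exactly this; your write-up is simply more explicit about the $M \mapsto 2M$ substitution and the source of the level-1 vertex count.
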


\begin{proof}
    Note that there are $\Omega(N^2 d)$ level-1 vertices.
    Since each part in the $M$-partition contains at most $\bigO{\frac{M^2}{d}}$ level-1 vertices, the lower bound on the number of parts in the $M$-partition follows immediately from Lemma \ref{lemma:part-internal-vertices-ub}.
\end{proof}

\subsection{Small Cache: \texorpdfstring{$M = o(d^2)$}{}}
\label{sec:standard-small-cache}

When $M = o(d^2)$, we show that attention is equivalent to matrix multiplication, establishing a $\bigTheta{\frac{N^2 d}{\sqrt{M}}}$ bound on the I/O complexity.
We first show that there is an algorithm with I/O complexity $\bigO{\frac{N^2 d}{\sqrt{M}}}$.

\begin{theorem}
    \label{thm:small-mem-tiling-alg}
    There is an algorithm computing attention with I/O complexity $\bigO{\frac{N^2 d}{\sqrt{M}}}$.
    Furthermore, this algorithm uses $\bigO{N^2 d}$ time and $\bigO{Nd + N^2}$ space.
\end{theorem}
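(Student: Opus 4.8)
The plan is to exploit the defining feature of the small-cache regime: since $M < d^2$, i.e.\ $\sqrt{M} < d$, we have $N^2 \le \frac{N^2 d}{\sqrt{M}}$, so we can afford to materialize the full $N \times N$ matrix $QK^T$ in slow memory. The algorithm is then a composition of three tiled subroutines: (i) compute $S := QK^T$ by blocked matrix multiplication and write it to memory; (ii) in a single streaming pass over $S$ (one row-block at a time), overwrite each entry by $A_{ij} := \exp(S_{ij})$ and simultaneously accumulate row sums to form $D := \diag(A \mathbf{1})$ and then $D^{-1}$; (iii) compute $P := AV$ by blocked matrix multiplication; (iv) in a final pass, scale row $i$ of $P$ by $D^{-1}_{ii}$ to produce the output $O = D^{-1} A V$.

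For steps (i) and (iii) I would re-derive the standard blocked algorithm for rectangular matrix multiplication. To compute $C = XY$ with $X \in \R^{a \times b}$, $Y \in \R^{b \times c}$, partition $X, Y, C$ into $s \times s$ blocks with $s = \Theta(\sqrt{M})$ (clamped so that $s \le \min\{a, b, c\}$, which only improves the bound), keep the current output block of $C$ resident in cache, and loop over the contraction index, loading one $s \times s$ block each of $X$ and $Y$ per step. This costs $\bigO{s^2}$ I/O over $\bigO{abc / s^3}$ index triples, i.e.\ $\bigO{abc/\sqrt{M}}$ loads, plus $\bigO{ac}$ writes of $C$. Instantiating with $(a,b,c) = (N, d, N)$ for $QK^T$ gives $\bigO{\tfrac{N^2 d}{\sqrt{M}} + N^2} = \bigO{\tfrac{N^2 d}{\sqrt{M}}}$, and with $(a,b,c) = (N, N, d)$ for $AV$ gives $\bigO{\tfrac{N^2 d}{\sqrt{M}} + N d} = \bigO{\tfrac{N^2 d}{\sqrt{M}}}$; in both cases the additive term is absorbed because $\sqrt{M} < d$.

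Steps (ii) and (iv) touch each relevant entry a constant number of times: step (ii) reads $S$ once and writes $A$ and $D$, costing $\bigO{N^2}$ I/O, while step (iv) reads $P$ and $D$ and writes $O$, costing $\bigO{N d}$ I/O; both are $\bigO{N^2 d / \sqrt{M}}$ in this regime. Summing over the four steps gives total I/O $\bigO{N^2 d / \sqrt{M}}$. For the time bound, each of the two products uses $\bigO{N^2 d}$ arithmetic operations with the standard algorithm, the exponentiation/row-sum pass is $\bigO{N^2}$, and the final scaling is $\bigO{N d}$, for $\bigO{N^2 d}$ total. For the space bound, we hold $Q, K, V$ and the output ($\bigO{N d}$), the diagonal $D$ ($\bigO{N}$), and the single working $N \times N$ matrix that holds $S$ and is then overwritten by $A$ ($\bigO{N^2}$), for $\bigO{N d + N^2}$ total.

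The only point requiring care is the bookkeeping that the additive $\bigO{N^2}$ and $\bigO{N d}$ terms are dominated by $\bigO{N^2 d/\sqrt{M}}$: the first uses $M < d^2$, and the second uses the mild assumption that the cache is not larger than the input (so $\sqrt{M} \le N$; if $M$ exceeds $N^2$ or $N d$ the entire computation fits in cache and the claim is immediate). There is no deeper obstacle here — once the rectangular blocked-multiplication bound is established, the argument is essentially the classical tiling analysis, combined with the observation that, in contrast to the large-cache regime, here we need not avoid writing the $N \times N$ intermediate to slow memory.
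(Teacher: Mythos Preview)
Your proposal is correct and follows essentially the same approach as the paper: both arguments use $\sqrt{M}\times\sqrt{M}$ block tiling for the two rectangular matrix products $QK^T$ and $AV$, materialize the $N\times N$ intermediate in slow memory, and absorb the resulting $O(N^2)$ term using $M<d^2$. The only cosmetic difference is that the paper's Algorithm~\ref{alg:square-tiling-attention} fuses the entrywise $\exp$ and row-sum accumulation into the $QK^T$ loop (Phase~1) and the $D^{-1}$ scaling into the $AV$ loop (Phase~2), whereas you do these as separate streaming passes; this does not change any of the asymptotic bounds.
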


\paragraph{High Level Overview} Our algorithm follows standard techniques for reducing the I/O complexity of matrix multiplication.
In Phase 1, we compute $A = \exp(QK^T)$ and $D = \diag(A \cdot \mathbf{1})$.
First, we divide $Q, K$ into $\frac{N}{\sqrt{M}} \times \frac{d}{\sqrt{M}}$ blocks of size $\sqrt{M} \times \sqrt{M}$ and proceed to compute $QK^T$ one $\sqrt{M} \times \sqrt{M}$ size block at a time.
In Lines \ref{line:small-cache-mm:i-loop} and \ref{line:small-cache-mm:j-loop}, we iterate over blocks of $QK^T$.
In Lines \ref{line:small-cache-mm:ell-loop} and \ref{line:a-block-summand}, we compute each block of $QK^T$ by summing over $\frac{d}{\sqrt{M}}$ block matrix products.
After computing $QK^T$, we apply exponents entry-wise and sum over rows in Lines \ref{line:small-cache-mm:a-exp} and \ref{line:small-cache-mm:d-summand} to compute the relevant blocks of $A, D$.

In Phase 2, we compute the matrix product $D^{-1} A V$, storing the output in matrix $O$.
We imagine $D^{-1}$ as a vector and partition into $\frac{N}{\sqrt{M}}$ blocks of size $\sqrt{M}$ and partition $A$ into $\frac{N}{\sqrt{M}} \times \frac{N}{\sqrt{M}}$ blocks of size $\sqrt{M} \times \sqrt{M}$.
Lines \ref{line:small-cache-mm:i-loop-2} and \ref{line:small-cache-mm:j-loop-2} iterates over blocks of $O$.
Lines \ref{line:small-cache-mm:k-loop-2} and \ref{line:o-block-summand} compute a $\sqrt{M} \times \sqrt{M}$ block of $O$ by summing over $\frac{N}{\sqrt{M}}$ matrix products and scaling by the approprite block of $D^{-1}$.
This completes the overview of the algorithm.

\begin{algorithm}[H]
\SetKwInOut{Input}{Input}
\SetKwInOut{Output}{Output}
\Input{Matrices $Q, K, V \in \R^{N \times d}$, Cache size $M$}
\Output{$D^{-1}AV$ where $A = \exp(QK^T)$ and $D = \diag(A \cdot \mathbf{1})$}

\BlankLine

$B \gets \floor{\sqrt{M/4}}$

\textcolor{blue}{Phase 1: Compute $D, A$}

\For{$1 \leq i \leq \ceil{N/B}$}{
    \label{line:small-cache-mm:i-loop}
    
    Initialize $d^{(B)}[i] \gets 0^{B}$ in cache
    
    \For{$1 \leq j \leq \ceil{N/B}$}{
        \label{line:small-cache-mm:j-loop}
        
        Initialize $A^{B}[i, j] \gets 0^{B \times B}$ in cache
        
        \For{$1 \leq \ell \leq \ceil{d/B}$}{
            \label{line:small-cache-mm:ell-loop}
            
            Read $Q^{(B)}[i, \ell]$ and $(K^T)^{(B)}[\ell, j]$ into cache

            Compute $A^{B}[i, j] \gets A^{B}[i, j] + Q^{(B)}[i, \ell] (K^T)^{(B)}[\ell, j]$ in cache
            \label{line:a-block-summand}

            Delete $Q^{(B)}[i, \ell]$ and $(K^T)^{(B)}[\ell, j]$ from cache
        }

        Compute $A^{B}[i, j] \gets \exp(A^B[i, j])$ and write $A^{B}[i, j]$ into memory
        \label{line:small-cache-mm:a-exp}

        Compute $d^{(B)}[i] \gets d^{(B)}[i] + A^{B}[i, j] \cdot \mathbf{1}$ in cache
        \label{line:small-cache-mm:d-summand}

        Delete $A^{B}[i, j]$ from cache
    }

    Write $d^{(B)}[i]$ to memory and delete $d^{(B)}[i]$ from cache
}

\textcolor{blue}{Phase 2: Compute $D^{-1} A V$}

\For{$1 \leq i \leq \ceil{N/B}$}{
    \label{line:small-cache-mm:i-loop-2}
    
    Read $d^{(B)}[i]$ into cache
    
    \For{$1 \leq j \leq \ceil{d/B}$}{
        \label{line:small-cache-mm:j-loop-2}
        
        Initialize $O^{(B)}[i, j] \gets 0^{B \times B}$ in cache 
        
        \For{$1 \leq k \leq \ceil{N/B}$}{
            \label{line:small-cache-mm:k-loop-2}
            
            Read $A^{(B)}[i, k]$ and $V^{(B)}[k, j]$ into cache

            Compute $O^{(B)}[i, j] \gets O^{(B)}[i, j] + \diag\left(d^{(B)}[i]\right)^{-1} A^{(B)}[i, k] V^{(B)}[k, j]$
            \label{line:o-block-summand}

            Delete $A^{(B)}[i, k]$ and $V^{(B)}[k, j]$ from cache
        }

        Write $O^{(B)}[i, j]$ to cache and delete $O^{(B)}[i, j]$ from cache
    }
    Delete $d^{(B)}[i]$ from cache
}

\caption{$\squareTiling(Q, K, V, M)$} 
\label{alg:square-tiling-attention}
\end{algorithm}

\begin{proof}[Correctness of Algorithm \ref{alg:square-tiling-attention}]
    We begin with Phase 1, showing that each block $A^{(B)}[i, j]$ is computed correctly. 
    Fix a block $A^{(B)}[i, j]$.
    For any indices $i' \in [(i - 1) B + 1, i B]$ and $j' \in [(j - 1) B + 1, j B]$, 
    \begin{equation*}
        A[i', j'] = \sum_{\ell' = 1}^{d} Q[i, \ell'] K^T[\ell', j] = \sum_{\ell = 1}^{\ceil{d/B}} \sum_{\ell' = (\ell - 1)B + 1}^{\ell B} Q[i, \ell'] K^T[\ell', j] 
    \end{equation*}
    In Line \ref{line:a-block-summand}, we iterate over $i', j', \ell'$, adding $Q[i', \ell']K^T[\ell', j']$ to $A[i', j']$. 
    After iterating over $1 \leq \ell \leq \ceil{d/B}$, $A^{(B)}[i, j]$ is computed so that after applying $\exp$ entry-wise, we write the correct block $A^{(B)}[i, j]$ into memory.
    Next, since $d$ should contain row-sums,
    \begin{equation*}
        d[i'] = \sum_{j' = 1}^{N} A[i', j'] = \sum_{j = 1}^{\ceil{N/B}} \sum_{j' = (j - 1)B + 1}^{jB} A[i', j'] 
    \end{equation*}
    we correctly write $d^{(B)}[i]$ into memory.
    Throughout Phase 1, the number of items in cache is $3B^2 + B \leq 4B^2 \leq M$.

    Now, we proceed to Phase 2. 
    Let $D = \diag(d) = \diag(A \cdot \mathbf{1})$.
    Similarly, for $i' \in [(i - 1)B + 1, iB], j' \in [(j - 1)B + 1, jB]$ 
    \begin{equation*}
        O[i', j'] = \frac{1}{D[i', i']} \sum_{k' = 1}^{N} A[i', k'] V[k', j'] = \sum_{k = 1}^{\ceil{N/B}} \sum_{k' = (k - 1)B + 1}^{kB} \frac{A[i', k'] V[k', j']}{D[i', i']}
    \end{equation*}
    which is computed by the loop in Phase 2.
    In particular, Line \ref{line:o-block-summand} adds the appropriate value for each block $k$.
    The overall size of cache required is $B + 3B^2 \leq 4B^2 \leq M$.
    Thus, Algorithm \ref{alg:square-tiling-attention} correctly computes $O = D^{-1} A V$ with cache size $M$.
\end{proof}

\begin{proof}[I/O Complexity of Algorithm \ref{alg:square-tiling-attention}]
    In Phase 1, for each iteration through $i, j, \ell$, the algorithm reads $O(B^2)$ values from memory into cache.
    This dominates the I/O complexity of the algorithm.
    The I/O complexity of Phase 1 is therefore $\bigO{\frac{N^2 d}{B^3} B^2} = \bigO{\frac{N^2 d}{B}} = \bigO{\frac{N^2 d}{\sqrt{M}}}$.

    Similarly for Phase 2, the I/O complexity is dominated by the reading $A, V$ into the cache and this has I/O complexity $\bigO{\frac{N^2 d}{\sqrt{M}}}$, thus bounding the overall I/O complexity.
\end{proof}

\begin{proof}[Time and Space Complexity of Algorithm \ref{alg:square-tiling-attention}]
    Since we use standard matrix multiplication, the overall time complexity is $O(N^2 d)$.
    The space required is $O(N d + N^2)$ as the algorithm stores matrices $Q, K, V, M$ and the vector $d$.
\end{proof}

We now show that this is tight for $M \leq d^2$.
We proceed by a reduction to the I/O complexity of matrix multiplication, invoking the following result.

\begin{lemma}[Corollary 6.2 of \cite{redblue1981}]
    \label{lemma:matrix-mult-io-lb}
    Let $A \in \R^{m \times k}$ and $B \in \R^{k \times n}$.
    The standard algorithm for matrix multiplication satisfies $Q(M) = \bigOmega{\frac{m k n}{\sqrt{M}}}$.
\end{lemma}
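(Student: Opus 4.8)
The plan is to reduce to a statement about $M$-partitions of the computational graph of the standard algorithm, and then to show by a projection (Loomis--Whitney) argument that no single part can carry out too many scalar multiplications. Let $G$ be this graph: input vertices for the $mk$ entries of $A$ and the $kn$ entries of $B$; a layer of $mkn$ \emph{multiplication vertices} $\set{A_{i\ell}B_{\ell j}}_{i\in[m],\ell\in[k],j\in[n]}$; and $mn$ pairwise-disjoint summation trees producing the outputs $C_{ij}=\sum_{\ell}A_{i\ell}B_{\ell j}$ (structurally the same as the portion of Figure~\ref{fig:flash-attention-2-graph} that computes $QK^T$). By Lemma~\ref{lemma:partition-i/o-lb} we have $Q(M)\ge M\cdot(P(2M)-1)$, so it suffices to show $P(2M)=\bigOmega{mkn/M^{3/2}}$, and since $G$ has $mkn$ multiplication vertices this follows once we prove that each part of a $2M$-partition contains at most $\bigO{M^{3/2}}$ of them.

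To prove this, fix a part $V'$ with dominator set $D'$ ($|D'|\le 2M$) and minimum set $M'$ ($|M'|\le 2M$), and let $S\subseteq[m]\times[k]\times[n]$ index the multiplication vertices lying in $V'$. I would bound the three coordinate-plane projections of $S$. The only input-to-vertex paths ending at $A_{i\ell}B_{\ell j}$ are the two single edges coming from $A_{i\ell}$ and from $B_{\ell j}$, so if $A_{i\ell}B_{\ell j}\notin D'$ the dominator must contain \emph{both} $A_{i\ell}$ and $B_{\ell j}$; charging these to $D'$, together with the at most $2M$ multiplication vertices that themselves lie in $D'$, yields $|\pi_{i\ell}(S)|,|\pi_{\ell j}(S)|\le 4M$. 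For the remaining projection: whenever $(i,j)\in\pi_{ij}(S)$ the summation tree of $C_{ij}$ meets $V'$, and any vertex of that tree in $V'$ that is ``highest'' (closest to the root, or the root $C_{ij}$ itself if it lies in $V'$) has no child inside $V'$ and hence belongs to $M'$; since distinct summation trees are vertex-disjoint, these are distinct elements of $M'$, so $|\pi_{ij}(S)|\le 2M$. The discrete Loomis--Whitney inequality $|S|^{2}\le|\pi_{i\ell}(S)|\cdot|\pi_{\ell j}(S)|\cdot|\pi_{ij}(S)|$ then gives $|S|=\bigO{M^{3/2}}$.

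Putting the pieces together, $P(2M)=\bigOmega{mkn/M^{3/2}}$, hence $Q(M)\ge M\,(P(2M)-1)=\bigOmega{mkn/\sqrt M}$ whenever $mkn=\omega(M^{3/2})$, the only regime in which the bound is non-vacuous. The step I expect to be the main obstacle is the bound on $\pi_{ij}(S)$: one must be careful that multiplication vertices may themselves be minimum vertices, that a single summation tree can meet $V'$ in several components (each still contributing a distinct minimum vertex), and that the ``highest vertex'' argument relies on every non-root vertex of a summation tree having a unique out-neighbour within that tree. By contrast, the bounds on $\pi_{i\ell}(S)$ and $\pi_{\ell j}(S)$ follow almost immediately from the dominator property, and the appeal to Loomis--Whitney is routine.
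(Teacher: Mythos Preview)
The paper does not give a proof of this lemma; it is quoted verbatim as Corollary~6.2 of \cite{redblue1981} and invoked as a black box in the proof of Theorem~\ref{thm:i/o-lb-small-mem}. Your proposal is correct and is precisely the original Hong--Kung argument: bound the three coordinate projections of the set of multiplication vertices in each part of a $2M$-partition via the dominator and minimum constraints, then apply the discrete Loomis--Whitney inequality to get $\bigO{M^{3/2}}$ multiplication vertices per part. The step you flag as the main obstacle---the bound on $|\pi_{ij}(S)|$---does go through exactly as you outline, because every non-root vertex of a summation tree has a \emph{unique} out-neighbor in $G$ and that out-neighbor lies in the same tree; hence any maximal-toward-root element of $V'\cap T_{ij}$ is a minimum vertex of $V'$, and disjointness of the trees makes these distinct across $(i,j)$.
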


\begin{theorem}
    \label{thm:i/o-lb-small-mem}
    Suppose $M = o(d^2)$.
    Then, the I/O complexity of attention using standard matrix multiplication is at least $\bigOmega{\frac{N^2 d}{\sqrt{M}}}$
\end{theorem}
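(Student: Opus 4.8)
The plan is to reduce to the I/O lower bound for matrix multiplication, Lemma~\ref{lemma:matrix-mult-io-lb}. In the small-cache regime $M = o(d^2)$ we have $\sqrt{M} = o(d)$, hence $\frac{N^2 d}{\sqrt{M}} = \omega(N^2)$, so the $N \times N$ matrix $QK^T$ is far too large to keep in cache --- intuitively, any algorithm must fully compute it and is therefore really solving a matrix-multiplication instance. To formalize this, let $G$ be the attention computational graph of Figure~\ref{fig:flash-attention-2-graph}, and let $G'$ be the sub-DAG of $G$ induced by the inputs from $Q$ and $K$ together with all level-1 vertices (the $N^2 d$ products in $L_1$, the internal nodes of the $N^2$ summation trees, and the $N^2$ entries of $QK^T$). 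By construction $G'$ is exactly the computational graph of the standard algorithm for the product of an $N \times d$ matrix with a $d \times N$ matrix, so Lemma~\ref{lemma:matrix-mult-io-lb} gives $Q(G', M) = \bigOmega{\frac{N^2 d}{\sqrt{M}}}$. It therefore suffices to prove $Q(G, M) \geq Q(G', M) - N^2$.

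First I would take an optimal complete calculation $C$ of the red-blue pebble game on $G$ with $M$ red pebbles and turn it into a complete calculation of $G'$ using at most $N^2$ extra I/O moves. The structural observation driving this is that every entry $QK^T[i,j]$ must carry a red pebble at some moment of $C$: it is the unique parent of $A[i,j]$, which is an ancestor of an output vertex of $G$, so $A[i,j]$ must eventually be placed, and since $A[i,j]$ is not an input and starts without a pebble its first placement is a compute move, forcing its parent $QK^T[i,j]$ to be red at that instant. I would then form $\tilde{C}$ from $C$ by inserting, immediately after the last configuration in which $QK^T[i,j]$ carries a red pebble, a single write move placing a blue pebble on it, and refraining from deleting these newly placed blue pebbles thereafter (legitimate, since deleting a blue pebble is never needed for a later move to be valid and blue pebbles are unbudgeted); this adds no red pebbles, adds at most $N^2$ I/O moves, and makes $\tilde{C}$ a complete calculation of $G$ whose final configuration is that of $G$ together with a blue pebble on every entry of $QK^T$. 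Restricting $\tilde{C}$ to the vertices of $G'$ (dropping moves that touch no vertex of $G'$) then yields a legal calculation of $G'$: it starts from $G'$'s initial configuration; the compute rule remains valid because $G'$ is closed under predecessors, so a vertex of $G'$ only ever depends on vertices of $G'$; it uses at most $M$ red pebbles; and its final configuration restricted to $V(G')$ is exactly a blue pebble on each entry of $QK^T$ (the outputs of $G'$) and nothing else. Hence it is a complete calculation of $G'$ with at most $Q(G, M) + N^2$ I/O operations, which is the inequality we wanted.

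Putting the pieces together, $Q(G, M) + N^2 \geq Q(G', M) = \bigOmega{\frac{N^2 d}{\sqrt{M}}}$, and since $N^2 = o\!\left(\frac{N^2 d}{\sqrt{M}}\right)$ when $M = o(d^2)$ the additive $N^2$ is absorbed, giving $Q(G, M) = \bigOmega{\frac{N^2 d}{\sqrt{M}}}$; with Theorem~\ref{thm:small-mem-tiling-alg} this pins the I/O complexity to $\bigTheta{\frac{N^2 d}{\sqrt{M}}}$ in the small-cache regime. The step I expect to be the main obstacle is exactly this reduction's bookkeeping: verifying that restricting a complete calculation of $G$ to the sub-DAG $G'$ really produces a \emph{complete} calculation of $G'$ rather than one that leaves entries of $QK^T$ red-but-not-blue or leaves stray pebbles on internal summation-tree nodes, which is what the $\tilde{C}$ surgery is designed to repair while keeping the red-pebble budget and the I/O count under control. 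An alternative that avoids the surgery is to rerun the dominator/minimum-set analysis of \cite{redblue1981} on $G$ directly and show, via a Loomis--Whitney argument, that each part of a $2M$-partition can contain at most $\bigO{M^{3/2}}$ of the products $Q_{i\ell}K_{\ell j}$; but this essentially re-derives Lemma~\ref{lemma:matrix-mult-io-lb} and is strictly more work than the reduction.
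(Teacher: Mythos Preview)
Your proposal is correct and follows essentially the same approach as the paper: modify the attention calculation so that each entry of $QK^T$ gets written to memory (adding at most $N^2$ I/O operations), then invoke the matrix-multiplication lower bound of Lemma~\ref{lemma:matrix-mult-io-lb}, using $N^2 = o\!\left(\frac{N^2 d}{\sqrt{M}}\right)$ to absorb the additive term. The paper's surgery places the extra blue pebble the \emph{first} time a red pebble lands on a $QK^T$ vertex (and forbids ever deleting blue pebbles there), whereas you place it at the \emph{last} red-pebble moment and are more explicit about restricting the resulting calculation to the sub-DAG $G'$; these are cosmetic differences, and the bookkeeping you flag as the main obstacle is handled correctly.
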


\begin{proof}
    The lower bound follows from a reduction to matrix multiplication. 
    We take advantage of the fact that if $M \leq d^2$, $\frac{N^2 d}{\sqrt{M}} \geq N^2$, so the algorithm can afford to write the attention matrix $A$ explicitly to memory.
    Given an algorithm $\innerAlg$ for attention, we have the following algorithm for matrix multiplication.
    Given inputs $Q, K$, we execute $\innerAlg$ with one modification: whenever an entry of $QK^T$ is computed for the first time, write this entry to memory.
    Over the course of the algorithm, this computes $QK^T$ and adds at most $N^2$ additional I/Os, which any attention algorithm must use whenever $M < d^2$.
    We give the reduction in terms of the computational graph below.

    Suppose for contradiction there is an algorithm $\innerAlg$ computing attention with I/O complexity $\littleO{\frac{N^2 d}{\sqrt{M}}}$.
    Consider $\innerAlg$ as a complete calculation on the computational graph described in Figure \ref{fig:flash-attention-2-graph}.
    Since $\innerAlg$ is a complete calculation and the set of input and output vertices are disjoint, every single vertex in the graph must have a pebble on it at some configuration in the calculation.
    Consider then the algorithm $\outerAlg$ which executes $\innerAlg$ with the following modifications:
    \begin{enumerate}
        \item Whenever a blue pebble is deleted from a vertex in $QK^T$, do not delete.
        \item Whenever a red pebble is placed on a vertex in $QK^T$ for the first time, place also a blue pebble on this vertex.
    \end{enumerate}
    The two properties guarantee that $\outerAlg$ will have at least the pebbles that $\innerAlg$ has, while any additional pebbles must be blue, so that $\outerAlg$ respects the constraint on the overall number of red pebbles at any given configuration.
    In particular, $\outerAlg$ is a valid calculation that computes $Q K^T$.
    We now analyze the I/O complexity of $\outerAlg$.
    If $Q_{\innerAlg}$ denotes the I/O complexity of $\innerAlg$, the additional writes due to the second rule imply an overall I/O complexity of,
    \begin{equation*}
        Q_{\innerAlg} + N^2 = \littleO{\frac{N^2 d}{\sqrt{M}}}
    \end{equation*}
    Since $\outerAlg$ computes $QK^T$, this contradicts Lemma \ref{lemma:matrix-mult-io-lb}.
\end{proof}

\section{I/O Complexity of Attention with Fast Matrix Multiplication}
\label{sec:i/o-attention-fmm}

We can in fact lower bound the I/O complexity of any algorithm computing attention exactly, including those using fast matrix multiplication. 
The only assumption we require is that the algorithm computes the matrix product $QK^T$ explicitly (whether or not it writes the result to memory).
Since we do not make any further assumptions on the algorithm, the previous approach of analyzing a computational directed acyclic graph is not sufficient \cite{redblue1981}.
Instead, we relate I/O complexity to compression lower bounds.
As discussed previously, we are primarily interested in the large cache regime where $M = \Omega(d^2)$.

\subsection{Large Cache: \texorpdfstring{$M = \Omega(d^2)$}{}}

Given some algorithm $\innerAlg$ and a sample execution, we split this computation into batches of roughly $M$ I/O operations each using the framework of \cite{DBLP:conf/pods/PaghS14}.

\begin{restatable}{lemma}{executionEpochPartition}[Theorem 3 of \cite{DBLP:conf/pods/PaghS14}]
    \label{lemma:execution-epoch-partition}
    Suppose $\innerAlg'$ is an execution of algorithm $\innerAlg$ on a machine with cache of size $M$.
    The execution $\innerAlg'$ can be split into $T$ epochs of at most $M$ I/O operations, such that in each epoch the algorithm $\innerAlg$ has access to a cache of size at most $2M$ and no I/O operations.
    Furthermore, the I/O complexity of $\innerAlg'$ is at least $(T - 1) M$.
\end{restatable}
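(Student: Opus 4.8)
The plan is to prove the lemma by greedily cutting the I/O timeline of the fixed execution $\innerAlg'$ into epochs of exactly $M$ operations each (the last possibly shorter), and then showing that each epoch can be \emph{replayed} on a machine with a cache of size $2M$ and no I/O at all, by pre-fetching into the enlarged cache every slow-memory location the epoch will ever touch. First I would set up the epoch decomposition: let $q$ be the number of I/O operations performed by $\innerAlg'$, set $T = \ceil{q/M}$, index the I/O operations $1,\dots,q$ in the order they occur, and let epoch $j$ consist of I/O operations $(j-1)M+1,\dots,\min(jM,q)$ together with all the compute steps that happen between them (finitely many, since $\innerAlg'$ is a fixed terminating execution). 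Each of the first $T-1$ epochs contains exactly $M$ I/O operations, so $q \geq (T-1)M$, which is the last claim of the lemma; and every epoch contains at most $M$ I/O operations.

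Second, I would describe the size-$2M$ simulation of a single epoch $j$. Let $C_j$ be the set of at most $M$ items residing in the real size-$M$ cache at the start of epoch $j$, and let $S_j$ be the set of slow-memory locations that are read or written by some I/O operation of epoch $j$; since there are at most $M$ such operations and each touches one location, $|S_j| \leq M$. The simulated machine uses $|C_j| + |S_j| \leq 2M$ cache cells, split into a \emph{working region} initialized to $C_j$ (mirroring the real cache) and a \emph{shadow region} holding one cell per location of $S_j$, initialized to that location's value in slow memory at the start of epoch $j$ (arbitrary, if the location is written before it is first read). Each original read of a location $x \in S_j$ is simulated by an in-cache copy from $x$'s shadow cell into the working region, each original write of $x \in S_j$ by an in-cache copy from the working region into $x$'s shadow cell, and every compute step is left untouched since it only accesses working-region cells. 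By induction on the step index within epoch $j$ one checks that after each step the working region equals the real cache of $\innerAlg'$ and each shadow cell equals the current slow-memory value of the corresponding $S_j$-location, so the simulation faithfully reproduces epoch $j$ while performing no I/O and using a cache of size at most $2M$.

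The step I expect to require the most care is the induction just described, specifically the interaction between a write and a later read of the \emph{same} location within one epoch: the shadow region is exactly the device that makes this work, since it holds both the initial values of the touched locations and the values written to them during the epoch, so a re-read within the epoch returns the correct value with no access to slow memory. Everything else — the count $|C_j| + |S_j| \leq 2M$, the bound of $M$ I/O operations per epoch, and the inequality $q \geq (T-1)M$ — is immediate from the epoch decomposition.
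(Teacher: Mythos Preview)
Your proposal is correct and follows essentially the same approach as the paper: greedily chop the execution into epochs of $M$ I/O operations, simulate each epoch on a cache of size $2M$ partitioned into a working region (mirroring the original cache) and a buffer/shadow region for the at most $M$ touched slow-memory locations, and read off the bound $q \geq (T-1)M$ from the fact that all but the last epoch are full. Your write-up is in fact more careful than the paper's, explicitly handling the write-then-read-within-an-epoch case via the shadow region and sketching the invariant that would drive the induction.
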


Intuitively, the algorithm uses the extra $M$ entries in the cache of size $2M$ to create a buffer for the next $M$ I/O operations.

\begin{proof}
    Let $\innerAlg$ be any algorithm computing exact attention and $\innerAlg'$ an arbitrary execution of $\innerAlg$ on machine with a cache size of $M$ bits. 
    Note that given $\innerAlg'$ all the decisions made by algorithm $\innerAlg$ are already taken.

    We proceed to simulate the execution $\innerAlg'$ on a machine with cache size of $2M$ so that the computation is split into epochs and I/O operations are performed only at the start and end of each epoch.
    Split the cache into two pieces, one block of size $M$ to simulate the cache of $\innerAlg$ and one block as a buffer for I/O operations.
    At the start of the epoch, the simulation considers all of the $M$ next I/O operations, performs all read I/Os by filling the buffer.
    During the epoch, any I/O operation is simulated by writing data between the two blocks of cache.
    Finally, at the end of the epoch, the simulation takes all written entries in the buffer and writes them to memory.
    In particular, in each epoch, no I/O operations are performed so that the algorithm only has access to only a cache of size $2M$ with no I/O.

    Finally, in every epoch except for the last, $M$ I/O operations are performed, so the I/O complexity of the execution $\innerAlg'$ is at least $(T - 1) M$.
\end{proof}

To show our lower bound, we will simplify the problem and assume the matrices $Q, K, V$ have entries in some finite field $\F_{q}$.
This is similar to the ``indivisibility" assumptions of \cite{DBLP:conf/dimacs/ArgeM98, DBLP:conf/pods/PaghS14}, as the field size fixes some bound on the amount of information in a single cache entry.
Otherwise, if each cache entry can hold an arbitrary real number, then even when $M = 1$ we could encode the entire matrices $Q, K$ in a single cache entry.

Under this assumption, we will assume the cache holds $M$ elements of $\F_q$, and we will correspondingly define I/O complexity as the number of finite field elements read and written between the memory hierarchy.
In practice, matrices $Q, K$ have arbitrary real entries.
Since our lower bounds only need to consider algorithms computing $QK^T$, we may restrict the inputs to some finite field $\F_q$ without loss of generality (choosing $q$ large enough to avoid overflows under arithmetic operations) and we can assume $q$ is of polynomial size since we do not need to consider the softmax operation.
We will separately consider the cases $q = 2$ (i.e. every entry in the cache and the matrices is a single bit) and the case for an arbitrarily large finite field of size $q$.

\subsection{I/O Complexity and Compression}

We will prove our I/O lower bound by arguing that any algorithm computing entries of $QK^T$ amounts to an efficient compression protocol.
First, we show a lower bound for any compression protocol computing entries of $QK^T$.

\begin{definition}[Matrix Compression]
    \label{def:matrix-entry-compression}
    Let $B \geq 0$.
    In the $B$-entry matrix compression problem $\matrixEntryCompression_B$ Alice is given input matrices $Q, K \in \F_{q}^{N \times d}$.
    Alice must send a message to Bob so that Bob can compute at least $B$ entries in $QK^T$.
\end{definition}

We review the definition of one-way communication complexity below.

\begin{definition}[One-Way Communication Complexity]
    \label{def:one-way-cc}
    Let $f: \domain_A \times \domain_B \rightarrow \range$ be an arbitrary function.
    Suppose Alice has $x \in \domain_A$ and Bob has $y \in \domain_B$.
    A \emph{one-way communication protocol} computing $f$ is a pair of functions $(E, D)$ such that $D(E(x), y) = f(x, y)$ for all $(x, y) \in \domain_A \times \domain_B$.
    The \emph{complexity} of the protocol is $\max_{(x, y)}  |E(x)|$.
    The \emph{one-way communication complexity} of $f$ is the complexity of the optimal protocol. 
    \begin{equation*}
        \oneWayCC(f) = \min_{(E, D)} \max_{(x, y) \in \domain_A \times \domain_B} |E(x)|
    \end{equation*}
\end{definition}

The one-way communication complexity of $\matrixEntryCompression_B$ is $O(B \log q)$, since Alice can compute $QK^T$ and transmit the relevant $B$ entries.
Instead, if Bob computes a square sub-matrix of $QK^T$, Alice can instead send the relevant bits of $Q, K$, transmitting only $O(d \sqrt{B} \log q)$ entries.
We give a lower bound of $\bigOmega{\min(d \sqrt{B} \log q, B \log q)}$, showing that this is essentially tight.

Our key lemma states that any algorithm outputting many bits of $QK^T$ in one epoch (as described in Lemma \ref{lemma:execution-epoch-partition}) gives an efficient compression protocol for $\matrixEntryCompression_B$.

\begin{theorem}
    \label{thm:i/o-compression-protocol}
    Let $\innerAlg'(Q, K)$ be an execution of an algorithm $\innerAlg$ on input $Q, K$ on a machine with cache of size $M$.
    Let $B_t$ be the number of entries of $QK^T$ computed in the $t$-th epoch as described in Lemma \ref{lemma:execution-epoch-partition} and $B'(Q, K) = \max_{t} B_t$.
    Define,
    \begin{equation*}
        B^* = \min_{Q, K} B'(Q, K)
    \end{equation*}
    so that on every input $\innerAlg$ computes at least $B^*$ entries of $QK^T$ on some epoch.
    Then, the one-way communication complexity of $\matrixEntryCompression_{B^*}$ is at most $2M \log q$.
\end{theorem}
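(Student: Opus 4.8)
The plan is to convert a worst-case epoch of $\innerAlg$ into a one-way protocol for $\matrixEntryCompression_{B^*}$ whose message is exactly the contents of the cache at the start of that epoch. Fix arbitrary inputs $Q, K$ given to Alice. By definition of $B^* = \min_{Q,K} B'(Q,K)$, the execution $\innerAlg'(Q,K)$ has some epoch $t^\star = t^\star(Q,K)$ in which $\innerAlg$ computes at least $B^*$ entries of $QK^T$. Alice simulates $\innerAlg'(Q,K)$ herself (she has both inputs, so she can run the whole algorithm) up to the start of epoch $t^\star$, reads off the cache configuration at that moment, and sends it to Bob. By Lemma \ref{lemma:execution-epoch-partition}, within a single epoch $\innerAlg$ performs no I/O operations and only touches a cache of size at most $2M$; hence the message consists of at most $2M$ elements of $\F_q$, i.e. at most $2M \log q$ bits.

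Next I would argue that Bob can recover $B^*$ entries of $QK^T$ from this message alone. Bob simply continues the simulation of epoch $t^\star$ starting from the received cache configuration. Because epoch $t^\star$ performs no reads from memory, every value $\innerAlg$ computes during that epoch is a deterministic function of the cache contents at the epoch's start — which is precisely what Bob received. In particular, the $\geq B^*$ entries of $QK^T$ that $\innerAlg$ computes during epoch $t^\star$ are all reconstructible by Bob, so Bob outputs at least $B^*$ entries of $QK^T$ as required. This yields a valid one-way protocol for $\matrixEntryCompression_{B^*}$ of cost at most $2M \log q$.

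There is one subtlety to handle carefully: the epoch index $t^\star$ depends on the input $(Q,K)$, and a one-way protocol sends only Alice's message with no interaction, so Bob must be able to act without knowing $t^\star$. This is not actually a problem, since the communication protocol $(E,D)$ is allowed to be any pair of functions: $E$ may internally compute $t^\star(Q,K)$ from Alice's input and emit the corresponding cache snapshot, and $D$ need not know which epoch it came from — it just runs $\innerAlg$ forward from the given configuration and reports whichever entries of $QK^T$ get computed (the guarantee that at least $B^*$ of them appear holds by the choice of $t^\star$ and the definition of $B^*$ as a minimum over inputs). If one wants to be pedantic, Alice can also prepend the index $t^\star$ (an extra $O(\log T)$ bits, negligible), but it is unnecessary. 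The only genuine content of the argument is the observation from Lemma \ref{lemma:execution-epoch-partition} that an epoch is I/O-free and cache-bounded by $2M$, so its entire behavior — including the entries of $QK^T$ it produces — is a function of a $2M$-element state; everything else is bookkeeping. I expect no real obstacle here beyond stating this cleanly.
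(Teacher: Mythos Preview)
Your proposal is correct and follows essentially the same approach as the paper: Alice simulates $\innerAlg'(Q,K)$, identifies an epoch $t^\star$ producing at least $B^*$ entries of $QK^T$, and transmits the $2M$-element cache snapshot at the start of that epoch so Bob can replay the I/O-free epoch. The paper's proof is in fact terser than yours and does not discuss the $t^\star$-dependence subtlety at all; your treatment of it (noting that $E$ may internally compute $t^\star$, and that prepending it costs a negligible $O(\log T)$ bits if one insists Bob know which epoch's code to execute) is a reasonable clarification rather than a deviation.
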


\begin{proof}
    We will use $\innerAlg$ to construct a compression protocol.
    Given input matrices $Q, K$, we execute $\innerAlg$ on $Q, K$ to obtain an execution $\innerAlg'$. 
    As described in Lemma \ref{lemma:execution-epoch-partition}, the execution $\innerAlg'$ can be split into epochs, where in each epoch the algorithm $\innerAlg$ has access only to $2M$ finite field elements in cache and reads no other inputs from memory in this epoch.
    Let $t^*$ be an epoch in which the algorithm $\innerAlg$ computes $B'(Q, K)$ entries of the product $QK^T$.
    In particular, Alice can send to Bob the state of the cache of size at most $2M$ at the beginning of the $t^*$-th epoch, so that Bob computes $B'(Q, K) \geq B^*$ entries of $QK^T$.
\end{proof}

\subsection{Matrix Compression Lower Bounds}

In this section, we prove a lower bound on the one-way communication complexity of the $B$-entry $\matrixEntryCompression$ problem.
% We will prove our lower bound by arguing that any algorithm cannot compute too many entries of $QK^T$ without reading or writing between the memory hierarchy.
More precisely, we will show an upper bound on the number of entries that can be computed given a message of size $M$.
Our previous discussion then implies an upper bound on the number of bits computed in each epoch, therefore lower bounding the number of epochs and the I/O complexity.

As a warmup, we give a simple lower bound of $\sqrt{B} \log q$.

\begin{restatable}{lemma}{MaxDimCacheLB}
    \label{lemma:max-dim-cache-lb}
    Let $B \geq 0$.
    Then, the one-way communication complexity of $\matrixEntryCompression_B$ is at least $\sqrt{B} \log q$.
\end{restatable}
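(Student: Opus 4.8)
The plan is to exhibit a single hard instance $(Q,K)$ on which any protocol that lets Bob recover $B$ entries of $QK^T$ must send at least $\sqrt{B}\log q$ bits, by a counting/encoding argument. First I would choose $Q$ and $K$ to be Vandermonde-type matrices: let $\alpha_1,\dots,\alpha_N\in\F_q$ be distinct (possible since $q>N$ will be assumed in the downstream application; for this warmup we only need enough distinct elements to make the argument go through, and we may freely take $q$ as large as we like) and set $Q[i,\ell]=\alpha_i^{\ell-1}$ and $K[j,\ell]=\beta_j^{\ell-1}$ for distinct $\beta_j$. Then $(QK^T)[i,j]=\sum_{\ell=0}^{d-1}(\alpha_i\beta_j)^{\ell}$, and more importantly the rows of $Q$ are in "general position": any $d$ of them are linearly independent. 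The key structural fact I would extract is that if Bob knows $B$ entries of $QK^T$ lying in rows indexed by $I$ and columns indexed by $J$ with $|I|\cdot|J|\ge B$, and if $\min(|I|,|J|)\ge \sqrt{B}$ is forced, then these entries determine a large amount of information about $Q$ restricted to those rows.

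The core of the argument: suppose the protocol has complexity $c$, so Alice's message $E(Q,K)$ takes at most $2^c$ values. Fix $K$ to the Vandermonde matrix above. As $Q$ ranges over all $q^{\sqrt{B}\cdot ?}$ choices of a suitable sub-family of rows, Bob's output — the set of at least $B$ entries of $QK^T$ together with their values — is a function of $E(Q,K)$ alone. I would argue that two different choices of $Q$ (on the relevant rows) cannot produce the same message: because $K$ has the property that any $d$ columns of $K^T$... — more precisely, I want that the map $Q[i,\cdot]\mapsto (QK^T)[i,j])_{j\in J}$ is injective once $|J|\ge d$, which holds because the $d\times|J|$ matrix $K^T[\cdot,J]$ has full row rank $d$ when $|J|\ge d$ and the $\beta_j$ are distinct (Vandermonde). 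So if Bob recovers $\ge d$ entries in some row $i$, he recovers $Q[i,\cdot]$ entirely, which is $q^d$ possibilities, forcing $c\ge d\log q\ge \sqrt{B}\log q$ in the regime $B\le d^2$. If instead no single row has $d$ recovered entries, then since $\ge B$ entries are recovered across $\le N$ rows with $<d$ per row, at least $B/d\ge\sqrt B$ distinct columns appear (using $B\le d^2$), and I would run the symmetric argument on columns of $K$: recovering $\ge\sqrt B$ entries spread over the columns, each in a distinct row, pins down that many independent linear functionals of the corresponding $K$-rows; a rank/counting bound then gives $c\ge\sqrt B\log q$. The cleanest unified phrasing: the recovered entries, viewed as a bipartite "rectangle" of constraints, always contain either $d$ in one row or $\sqrt B$ spanning distinct columns, and in either case the number of consistent inputs $Q$ (resp. $K$) is at most $q^{\,c}$ while being at least $q^{\sqrt B}$.

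The main obstacle I anticipate is handling the adversarial freedom Bob has in \emph{which} $B$ entries he outputs — the set $(I,J)$ is not fixed in advance and may depend on the message, so I cannot simply fix a rectangle and count. To deal with this I would use a pigeonhole over messages: among the $q^{\sqrt B}$ inputs in my chosen family, two must share a message; then Bob outputs the \emph{same} set of entries with the \emph{same} values for both, and I derive a contradiction with injectivity of the relevant linear map. Making "injectivity" precise requires the Vandermonde/general-position property and a careful case split on whether the common output concentrates in few rows or spreads across many columns. The downstream lemmas (Lemma \ref{lemma:d-rank-cache-lb}) presumably sharpen this to the full $\min(d\sqrt B\log q, B\log q)$; for this warmup the $\sqrt B\log q$ bound needs only the crude column-count plus a single Vandermonde full-rank invocation, so I would keep the argument deliberately lossy and defer the factor-$d$ improvement.
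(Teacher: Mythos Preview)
Your proposal is far more elaborate than the paper's warmup and essentially previews the Vandermonde machinery of Lemma~\ref{lemma:d-rank-cache-lb}. The paper's argument here is deliberately crude: restrict to rank-one inputs with only the first columns of $Q,K$ nonzero and $K[*,1]\equiv 1$, so that $(QK^T)[i,j]=Q[i,1]$ for every $j$. For any index set $I$ of size $B$, the values on $I$ then range over $q^{|R_I|}$ possibilities as $Q[*,1]$ varies; the symmetric construction with $Q[*,1]\equiv 1$ gives $q^{|C_I|}$. Since $|R_I|\cdot |C_I|\ge B$ forces $\max(|R_I|,|C_I|)\ge\sqrt{B}$, the bound follows. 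No Vandermonde structure, no case split on $d$, and no assumption $q>N$ are used at this stage.

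Your plan, as written, has several concrete gaps. First, you begin by fixing \emph{both} $Q$ and $K$ to specific Vandermonde matrices and then speak of ``$Q$ ranging over'' a family; with both inputs pinned down there is a single instance and nothing to lower bound. Second, the claim ``if no row has $\ge d$ entries then at least $B/d$ distinct \emph{columns} appear'' is false: fewer than $d$ entries per row forces many \emph{rows}, not many columns --- all $B$ entries could sit in a single column. Third, even after the natural repair (fix $K$ Vandermonde, vary $Q$), the pigeonhole you describe yields only $c\ge\min(B,d)\log q$, because Bob can concentrate all $B$ entries in one row and thereby pin down just $\min(B,d)$ coordinates of a single $Q[i,\cdot]$; this matches $\sqrt{B}\log q$ only when $B\le d^2$, a restriction you acknowledge but do not remove. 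Your instinct that Bob's adaptive choice of the index set needs care is sound --- the paper's warmup handles it only through the symmetric ``WLOG $|R_I|\ge|C_I|$'' step --- but the place to resolve it fully is Lemma~\ref{lemma:d-rank-cache-lb}, not this warmup.
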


\begin{proof}
    Let $I \subset [N]^2$ denote any set of indices of the computed entries of $QK^T$ where $|I| = B$.
    Define $R_I$ to be the distinct row indices in $I$ and $C_I$ to be the distinct column indices in $I$ so that $I \subset R_I \times C_I$.
    
    Without loss of generality, assume $|R_I| \geq |C_I|$.
    We claim there are at least $q^{|R_I|}$ distinct values in the entries of $QK^T$ indexed by $I$.
    In particular, let $Q, K$ both be matrices with non-zero values only in the first column.
    In $K$, we set every entry in the first column to $1$.
    In this case, each row of $QK^T$ will be the same, so we can assume the $B$ entries are $|R_I|$ entries in a column of $QK^T$.
    Since we can arbitrarily set the entries of $Q$ indexed by $R_I$, we can obtain $q^{|R_I|}$ different outputs.
    Since there are at least $q^{|R_I|}$ outputs, the message length $M$ must be at least $|R_I| \log q = \max(|R_I|, |C_I|) \log q$ in order to unambiguously determine the correct output.
    Then,
    \begin{equation*}
        B \leq R_I C_I \leq \max(R_I, C_I)^2 \leq \left(\frac{M}{\log q}\right)^2
    \end{equation*}
    Thus, $M \geq \sqrt{B} \log q$.
\end{proof}

We generalize this for matrices $Q, K$ of dimension $N \times d$ with rank $d \geq 1$.

\begin{lemma}
    \label{lemma:d-rank-cache-lb}
    Suppose $Q, K \in \F_{q}^{N \times d}$ with finite field $\F_{q}$ of size $q > N$.
    Then, the one-way communication complexity of $\matrixEntryCompression_B$ is at least $\min \left( d \sqrt{\frac{B}{2}}, \frac{B}{4} \right) \log q$.
\end{lemma}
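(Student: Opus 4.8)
The plan is to generalize Lemma~\ref{lemma:max-dim-cache-lb} from the rank-$1$ construction (matrices supported on a single column) to a rank-$d$ construction. Given a message of length $M$ (measured in $\F_q$-elements), suppose Bob can recover the set $I \subset [N]^2$ of computed entries of $QK^T$ with $|I| = B$. As before let $R_I, C_I$ be the distinct row and column indices appearing in $I$, so $I \subseteq R_I \times C_I$ and $B \le |R_I| \cdot |C_I|$. The key point is to show that the submatrix of $QK^T$ indexed by $R_I \times C_I$ can take at least $q^{\,c}$ distinct values for $c = \min(d, |R_I|) \cdot \min(d, |C_I|)$ (up to the adjustments needed to make the arithmetic work out), which forces $M \ge c$.

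First I would handle the ``small'' case where $|R_I| \le d$ or $|C_I| \le d$: here we can essentially reuse the rank-$1$ idea but stack it. If, say, $|R_I| \le d$, pick $Q$ so that its rows indexed by $R_I$ are arbitrary vectors in the first $|R_I| \le d$ coordinates and zero elsewhere, and choose $K$ so that the rows of $K$ indexed by $C_I$ form (a scaled copy of) the identity on those same coordinates; then the submatrix $(QK^T)[R_I, C_I]$ is freely settable, giving $q^{|R_I| \cdot |C_I|} \ge q^{\sqrt{B}\cdot(\text{something})}$ distinct outputs, hence $M \ge |R_I|\cdot|C_I|/(\ldots)$. The main case is $|R_I|, |C_I| > d$. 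Here the idea is: choose $Q, K$ whose relevant rows are rows of a Vandermonde-type matrix so that \emph{any} $d$ of the rows of $Q$ (resp.\ $K$) are linearly independent — this is exactly where $q > N$ is used, since a Vandermonde matrix on $N$ distinct nodes in $\F_q$ has every $d \times d$ minor nonzero. Then $QK^T$, as a matrix, is determined by the $d \times d$ "core" bilinear form, and by picking a set of $d$ rows from $R_I$ and $d$ columns from $C_I$ one sees the corresponding $d \times d$ submatrix of $QK^T$ is an arbitrary rank-$\le d$ (in fact arbitrary) $d \times d$ matrix over $\F_q$, so there are at least $q^{d^2}$ possible outputs — but more carefully, one wants the \emph{entries indexed by $I$} to disambiguate, so I would argue that distinct choices of the core form yield distinct values on the entries of $I$ that lie in the chosen $d$ rows $\times$ $d$ columns. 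This needs $I$ to contain a $d \times d$ combinatorial rectangle, which is not guaranteed, so instead one argues more cleverly: among the $B$ entries, either many distinct rows or many distinct columns appear, and a counting/linear-algebra argument shows the number of attainable value-tuples on $I$ is at least $q^{\min(d\sqrt{B/2}, B/4)}$.

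Concretely, the cleanest route is: set $c = \min(d, |R_I|)$ and $c' = \min(d, |C_I|)$, and show the map from (free parameters) to (tuple of values on $I$) has image of size $\ge q^{\min(c c', |I|)}$ by exhibiting $\min(cc', |I|)$ entries of $I$ that are algebraically independent functions of the free parameters — using the Vandermonde structure to guarantee the relevant Jacobian / linear system is nonsingular. Then $M \ge \min(cc', B)$. Plugging in $cc' \ge \min(d^2, d|C_I|, d|R_I|, |R_I||C_I|)$ and $|R_I|\cdot|C_I| \ge B$, together with $\max(|R_I|,|C_I|) \ge \sqrt B$, yields $cc' \ge \min(d\sqrt{B}, B)$ after splitting cases, and a constant-factor bookkeeping step (the $\tfrac{B}{2}$ inside the square root and the $\tfrac{B}{4}$) gives the stated $\min\!\big(d\sqrt{B/2},\, B/4\big)\log q$ bound since $M$ was in $\F_q$-elements, i.e.\ $M\log q$ bits.

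The main obstacle I expect is the combinatorial mismatch between "$I$ is an arbitrary $B$-subset of $R_I \times C_I$" and "we can freely set a $d \times d$ (or $\min(d,|R_I|)\times\min(d,|C_I|)$) rectangle of values": $I$ need not contain any large combinatorial rectangle, so one cannot simply read off $q^{d^2}$ outputs from a sub-block. The fix is to count attainable value-tuples \emph{on $I$ itself} via the rank of the linear map sending the $\le 2d\cdot\max(|R_I|,|C_I|)$ free entries of $Q,K$ (holding one of them Vandermonde and varying the other, to keep the map linear rather than bilinear) to the $|I|$ output coordinates, and to lower-bound that rank using the total-nonsingularity of Vandermonde minors — this is the step that genuinely uses $q > N$ and is where the careful argument lives.
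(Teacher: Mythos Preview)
Your proposal has a genuine gap in the ``cleanest route'' paragraph. The quantity $cc' = \min(d,|R_I|)\cdot\min(d,|C_I|)$ is too weak to yield the stated bound, and the arithmetic claim that $cc' \geq \min(d\sqrt{B}, B)$ is simply false. Take $d = 10$, $|R_I| = |C_I| = 100$, and $I = R_I \times C_I$ so that $B = 10^4$. Then $cc' = d^2 = 100$, whereas $\min(d\sqrt{B/2}, B/4) \approx 707$. The global counts $|R_I|, |C_I|$ carry too little information: once both exceed $d$, your bound saturates at $d^2$ regardless of how large $B$ is, while the target grows like $d\sqrt{B}$.

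What the paper does, and what your final paragraph gestures at but does not execute, is a \emph{per-row} rank computation. Fix $K$ to be the Vandermonde matrix (this is where $q > N$ enters), and vary $Q$. For each individual row $i$, let $R_i = \{j : (i,j)\in I\}$; since any $\min(|R_i|,d)$ columns of $K^T$ indexed by $R_i$ are linearly independent, the linear map $Q[i] \mapsto \big((QK^T)[i,j]\big)_{j\in R_i}$ is surjective onto $\F_q^{\min(|R_i|,d)}$. Distinct rows of $Q$ act independently, so the total number of attainable value-tuples on $I$ is at least $q^{\sum_i \min(|R_i|,d)}$, forcing
\[
  M \;\geq\; \sum_{i} \min(|R_i|, d) \;=\; d\,|L_R| \;+\; |S_R|,
\]
where $L_R = \{i : |R_i|\geq d\}$ and $S_R$ is the set of entries of $I$ lying in rows outside $L_R$. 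The case split is then on whether $\max(|L_R|,|L_C|) \geq \sqrt{B/2}$: if yes, the $d|L_R|$ (or $d|L_C|$) term gives $d\sqrt{B/2}$; if no, then $|L_R\times L_C| < B/2$ forces $|S_R|+|S_C| \geq B/2$, and the $|S_R|$ (or $|S_C|$) term gives $B/4$. The crucial object is the row-length profile $\{|R_i|\}$, not the pair $(|R_I|,|C_I|)$.
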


We show that for any set of $B$ indices, there are more than $q^M$ possible values. Thus, a message length of at least $M \log q$ is required to specify these entries exactly.

\begin{proof}
    Let $M$ denote the maximum message length in the communication protocol.
    Let $I = [N]^2$ denote the indices of $QK^T$ computed with $B = |I|$.
    Define $R_I$ to be the distinct row indices in $I$ and $C_I$ to be the distinct column indices in $I$ so that $I \subset R_I \times C_I$. 
    
    For each $i \in R_I$, let $R_i = \set{j \given (i, j) \in I}$ be the computed entries in the $i$-th row of $QK^T$.
    Similarly, for each $j \in C_I$, let $C_j = \set{i \given (i, j) \in I}$ be the computed entries in the $j$-th column of $QK^T$.
    Next, define $L_R = \set{i \given |R_i| \geq d}$ and $L_C = \set{j \given |C_j| \geq d}$.
    We also define $S_R = \set{(i, j) \in I \given i \not\in L_R}$ and $S_C = \set{(i, j) \in I \given j \not\in L_C}$.

    First, suppose $\max(|L_R|, |L_C|) \geq \sqrt{B/2}$.
    Without loss of generality, assume $|L_R| \geq \sqrt{B/2}$.
    Since $q > N$, we fix $K$ to be the Vandermonde matrix guaranteed by Lemma \ref{lemma:vandermonde-matrix}.
    In particular, every subset of $d$ columns in $K^T$ is linearly independent.
    
    Fix some row $i \in R_I$.
    We claim that there are at least $q^{\min(|R_i|, d)}$ distinct values in the $R_i$ indices of the $i$-th row.
    First, suppose $|R_i| \leq d$.
    By the construction of matrix $K$ and $|R_i| \leq d$, the columns of $K^T$ indexed by $R_i$ are linearly independent.
    Then, for any $\vec{v} \in \F_q^{|R_i|}$, we can set the $i$-th row of $Q$ to be,
    \begin{equation*}
        Q[i] = \vec{v} \begin{pmatrix}
            K[r_1]^T K[r_2]^T \dotsc K[r_{|R_i|}]^T
        \end{pmatrix}^{-1}
    \end{equation*}
    where $K[r_i]$ is the $r_i$-th row of $K$ and $R_i = \set{r_1, \dotsc, r_{|R_i|}}$.
    In particular, this choice of $Q[i]$ ensures that the $R_i$ entries of $QK^T$ are exactly $\vec{v}$ so that there are at least $q^{|R_i|}$ distinct values.
    Whenever $|R_i| > d$, we simply take an arbitrary subset of $R_i$ of size $d$, and use their linear independence to proceed with the same argument, thus obtaining the lower bound of $q^{\min(|R_i|, d)}$.

    Finally, note that the we can obtain these distinct values for each row in $R_I$ independently, since we have fixed $K$ as the Vandermonde matrix and for each row we only modify the entries of $Q[i]$.
    In particular, the total number of possible distinct values in all the entries of $I$ is at least,
    \begin{equation*}
        \prod_{i \in R_I} q^{\min(|R_i|, d)}
    \end{equation*}
    so that we obtain the following lower bound on the message length of the communication protocol in order to unambiguously specify $B$ entries,
    \begin{align*}
        M &\geq \left( \sum_{i \in R_I} \min(|R_i|, d) \right) \log q \\
        &\geq \left( d |L_R| + |S_R| \right) \log q \\
        &\geq d \sqrt{\frac{B}{2}} \log q \numberthis \label{eq:d-rank-m-lb}
    \end{align*} 
    where $S_R = \set{(i, j) \in I \given i \not\in L_R}$ and the final inequality follows from our assumption on $L_R$.
    In particular, this implies $M \geq d \sqrt{B/2} \log q$ as desired.

    Finally, we consider the case $\max(|L_R|, |L_C|) < \sqrt{B/2}$.
    Then, the number of entries in $L_R \times L_C$ is at most $\frac{B}{2}$.
    Note that any entry of $I$ not in $L_R \times L_C$ must be in $S_R \cup S_C$ and therefore,
    \begin{equation*}
        \frac{B}{2} \leq |S_R| + |S_C|
    \end{equation*}
    Assume without loss of generality $|S_R| \geq |S_C|$.
    Equation \ref{eq:d-rank-m-lb} then implies $M \geq \frac{B}{4} \log q$.
\end{proof}

In our lower bound construction, we require matrices satisfying strong linear independence constraints.
Specifically, we require a $N \times d$ matrix such that every subset of $d$ rows is linearly independent.
When the matrices have elements in a large finite field, this is obtained by Vandermonde matrices.

\begin{restatable}{lemma}{vandermonde}
    \label{lemma:vandermonde-matrix}
    There is a $N \times d$ Vandermonde matrix with entries in a finite field $\F_{q}$ of size $q > N$ where every subset of $d$ rows is linearly independent.
\end{restatable}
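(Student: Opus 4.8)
The plan is to exhibit the matrix explicitly and invoke the classical formula for the Vandermonde determinant. Since $q > N$, the field $\F_q$ contains at least $N$ distinct elements; fix any $N$ of them $x_1, x_2, \dotsc, x_N \in \F_q$, pairwise distinct. Define the $N \times d$ matrix $V$ by $V[i, j] = x_i^{\,j-1}$ for $i \in [N]$ and $j \in [d]$, so that the $i$-th row of $V$ is $(1, x_i, x_i^2, \dotsc, x_i^{d-1})$. This is the $N \times d$ Vandermonde matrix we will use.

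Next I would fix an arbitrary subset $S = \set{i_1 < i_2 < \dotsb < i_d} \subseteq [N]$ of $d$ row indices and consider the square $d \times d$ submatrix $V_S$ consisting of those rows. By construction $V_S$ is again a Vandermonde matrix, now on the $d$ nodes $x_{i_1}, \dotsc, x_{i_d}$, which are pairwise distinct since they are a subset of the $x_i$. The standard Vandermonde determinant identity (which holds over any commutative ring, in particular over $\F_q$) gives
\begin{equation*}
    \det(V_S) = \prod_{1 \leq a < b \leq d} \left( x_{i_b} - x_{i_a} \right).
\end{equation*}
Each factor $x_{i_b} - x_{i_a}$ is nonzero because the nodes are distinct, and since $\F_q$ is a field it has no zero divisors, so the product is nonzero. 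Hence $\det(V_S) \neq 0$, meaning $V_S$ is invertible and the $d$ chosen rows of $V$ are linearly independent over $\F_q$. As $S$ was arbitrary, every subset of $d$ rows of $V$ is linearly independent, which is the claim.

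I do not anticipate a real obstacle here: the only points that need care are (i) the hypothesis $q > N$ is exactly what guarantees the existence of $N$ distinct evaluation points $x_i$, and (ii) one should note the Vandermonde determinant formula is a polynomial identity valid over $\F_q$ (not just $\R$ or $\C$) and that the ''no zero divisors'' property of a field is what turns a product of nonzero differences into a nonzero element. If one wanted to avoid quoting the determinant formula, an alternative is the elementary argument that a nonzero polynomial of degree $\leq d-1$ over $\F_q$ has at most $d-1$ roots, so no nontrivial linear combination of the monomials $1, t, \dotsc, t^{d-1}$ can vanish at all $d$ distinct points $x_{i_1}, \dotsc, x_{i_d}$; this directly yields linear independence of the rows of $V_S$. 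Either route is short.
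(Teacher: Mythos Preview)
Your proposal is correct and follows essentially the same approach as the paper's proof: choose $N$ distinct elements of $\F_q$ (possible since $q>N$), form the $N\times d$ Vandermonde matrix on these nodes, and use the Vandermonde determinant formula to conclude that any $d\times d$ submatrix is nonsingular. Your write-up is in fact slightly more careful than the paper's (explicitly noting the identity holds over any commutative ring and that a field has no zero divisors), and the alternative root-counting argument you sketch is also valid.
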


\begin{proof}
    Consider the $N \times d$ Vandermonde matrix,
    \begin{equation*}
        V = \begin{pmatrix}
            1 & \alpha_1 & \alpha_1^2 & \hdots & \alpha_1^{d - 1} \\
            1 & \alpha_2 & \alpha_2^2 & \hdots & \alpha_2^{d - 1} \\
            \vdots & \vdots & \vdots & \ddots & \vdots \\
            1 & \alpha_N & \alpha_N^2 & \hdots & \alpha_N^{d - 1} \\
        \end{pmatrix}
    \end{equation*}
    where $\alpha_1, \alpha_2, \dotsc, \alpha_{N}$ are distinct elements in $\F_{q}$, since we choose $q > N$.
    Then, for any subset of $d$ rows, the determinant of this sub-matrix is,
    \begin{equation*}
        0 \neq \prod_{1 \leq i < j \leq d} (\alpha_{k_i} - \alpha_{k_j})
    \end{equation*}
    where $k_1, \dotsc, k_d$ are the indices of the $d$ rows.
    Since the determinant of this matrix is non-zero, the rows are linearly independent.
\end{proof}

We are now ready to prove the main result of this section.
For $M \geq d^2$, this matches the upper bound given by \cite{DBLP:conf/nips/DaoFERR22}.

\largeFieldAttention

\begin{proof}
    The theorem follows from combining Theorem \ref{thm:i/o-compression-protocol} and Lemmas \ref{lemma:execution-epoch-partition} and \ref{lemma:d-rank-cache-lb}.
    Consider an arbitrary algorithm $\innerAlg$ and its best execution $\innerAlg'$, on an input $Q, K, V$ with $B'(Q, K) = B^*$ as described in Theorem \ref{thm:i/o-compression-protocol}.

    Since $q > N$ we apply Lemmas \ref{lemma:execution-epoch-partition} and \ref{lemma:d-rank-cache-lb} and obtain
    \begin{equation*}
        \min \left( d \sqrt{\frac{B^*}{2}}, \frac{B^*}{4} \right) \log q \leq 2M \log q
    \end{equation*}

    In particular, the maximum number of entries of $QK^T$ computed in any epoch has the upper bound,
    \begin{equation*}
        B^* = \bigO{\max \left(\frac{M^2}{d^2}, M \right)}
    \end{equation*}
    
    Then since the algorithm computes all $N^2$ values of $QK^T$ (regardless of whether these values are written to memory from cache), the number of epochs is at least,
    \begin{equation*}
        T = \bigOmega{\min\left( \frac{N^2 d^2}{M^2}, \frac{N^2}{M}\right)}
    \end{equation*}
    Then, since the I/O complexity of $\innerAlg$ is at least $(T - 1) M$, this completes the lower bound.
\end{proof}

Whenever $M \geq d^2$, we obtain a lower bound matching the $\bigO{\frac{N^2 d^2}{M}}$ algorithm of \cite{DBLP:conf/nips/DaoFERR22}.

\subsection{Binary Matrix Compression Lower Bounds}

In the previous section, we obtained a tight lower bound for the I/O complexity of attention when the entries are allowed to come from a large finite field.
%While this is not a particularly strong assumption (it is natural in practical settings for each the matrix to have unique values in each entry), 
We believe it is an interesting theoretical question to investigate the I/O complexity with entries in smaller finite fields.
Specifically, we will consider the binary finite field $\F_2 = \set{0, 1}$.

The assumption $q > N$ was only required to construct a matrix satisfying strong linear independence constraints in Lemma \ref{lemma:vandermonde-matrix}.
Even relaxing this constraint and considering $q = 2$, we can still construct matrices satisfying fairly strong linear independence constraints using error correcting codes.
In fact, our previous use of Vandermonde matrices can be interpreted as using Reed-Solomon codes by allowing for arbitrarily large finite fields.
Recall that a linear code $\class \subset \set{0, 1}^{N}$ is the null-space of the parity check matrix $H$. 
Using Binary BCH codes, we can obtain a similar result with binary matrices.

\begin{restatable}{lemma}{binaryLinIndepMatrix}
    \label{lemma:binary-lin-indep-matrix}
    There exists a matrix $K \in \set{0, 1}^{N \times d}$ such that every set of $\frac{2 d}{\log(N + 1)} - 1$ rows is linearly independent.
\end{restatable}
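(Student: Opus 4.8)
The plan is to realize $K$ as the transpose of a parity-check matrix of a binary BCH code and to read off the row-independence property directly from the code's minimum distance. Recall the standard fact that a linear code of length $n$ has minimum distance at least $\delta$ if and only if every $\delta-1$ columns of any parity-check matrix $H$ are linearly independent over $\F_2$. Hence if I set $K = H^T$, then ``every $\delta-1$ columns of $H$ are independent'' becomes exactly ``every $\delta-1$ rows of $K$ are independent,'' which is the shape of the desired statement; it then remains only to choose the code so that $K$ has dimensions $N \times d$ and $\delta - 1 \approx \tfrac{2d}{\log(N+1)}$.

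For the parameters I would use the binary (narrow-sense) BCH code $\bch{n}{\delta}$ of block length $n = 2^m - 1$: it has minimum distance at least its designed distance $\delta$, and its redundancy satisfies $n - k \le m \ceil{(\delta-1)/2}$, since in characteristic two the generator polynomial is the product of the minimal polynomials of $\alpha^i$ over only the odd exponents $i \in \set{1,3,\dots,\delta-1}$, each of degree at most $m$. I would take $m = \ceil{\log_2(N+1)}$ so that $n = 2^m - 1 \ge N$, and designed distance $\delta = 2t+1$ with $t = \floor{d/m}$; then $n - k \le m t \le d$.

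Now I build $K$: start from a parity-check matrix $H \in \F_2^{(n-k)\times n}$ of $\bch{n}{\delta}$, keep only its first $N$ columns (restricting to a subset of columns preserves the property that every $\delta-1$ of them are linearly independent), and append $d - (n-k) \ge 0$ all-zero rows (appending zero coordinates to a family of vectors does not affect their linear independence). This yields $K = H^T \in \set{0,1}^{N \times d}$, and every set of $\delta - 1 = 2\floor{d/m}$ rows of $K$ is linearly independent; since $2\floor{d/m} \ge \tfrac{2d}{\log(N+1)} - 1$, this gives the claimed bound. In the degenerate range where $\tfrac{2d}{\log(N+1)} - 1 \ge N$ there is nothing to prove beyond $N$-wise independence, which $K = [\,I_N \mid 0\,]$ trivially achieves, so the BCH construction is only needed when $d$ is small relative to $N$.

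The only real work here is parameter bookkeeping: matching the redundancy bound $n-k \le m\ceil{(\delta-1)/2}$ against the target $d$, and absorbing the rounding introduced both by $t = \floor{d/m}$ and by taking $m = \ceil{\log_2(N+1)}$ when $N+1$ is not a power of two. This is exactly where the additive $-1$ in the statement comes from, and it is the step to state carefully, although it requires no ideas beyond elementary estimates on floors and ceilings.
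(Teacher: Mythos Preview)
Your proposal is correct and follows essentially the same route as the paper: both take $K$ to be the transpose of a parity-check matrix of a binary BCH code of length $2^m-1$, invoke the standard equivalence between minimum distance and column independence of $H$, and use the redundancy bound $n-k \le m\ceil{(\delta-1)/2}$ to match the parameters. Your write-up is in fact slightly more careful than the paper's about the bookkeeping (explicitly truncating columns to length $N$, padding rows with zeros to reach width $d$, and isolating the degenerate regime), whereas the paper simply asserts that one may take $N=2^m-1$ without loss of generality and sets $d=\ceil{\tfrac{s-1}{2}}\log(N+1)$ without discussing what happens when $d$ is not of that exact form.
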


Before proving Lemma \ref{lemma:binary-lin-indep-matrix}, we require several intermediate results.
First, we state the following standard lemma relating distance of linear codes to linear independence in the parity check matrix.

\begin{restatable}{lemma}{distanceLinIndepEquiv}
    \label{lemma:distance-lin-indep-equivalence}
    A linear code has distance $d$, if and only if any $(d - 1)$ columns of the parity check matrix is linearly independent and there exist $d$ columns that are linearly dependent.
\end{restatable}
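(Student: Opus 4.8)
\textbf{Proof proposal for Lemma \ref{lemma:distance-lin-indep-equivalence}.}

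The plan is to prove the standard equivalence between the minimum distance of a linear code and a linear independence property of its parity check matrix, arguing both directions. Let $\class$ be a linear code with parity check matrix $H$, so that $\class = \{x : Hx = 0\}$, and recall that for a linear code the minimum distance equals the minimum Hamming weight of a nonzero codeword. The key observation I would use throughout is that for a vector $x$ with support $S = \{i : x[i] \neq 0\}$, the product $Hx$ is exactly the linear combination $\sum_{i \in S} x[i] \cdot H[*, i]$ of the columns of $H$ indexed by $S$, with nonzero coefficients. Hence $x$ is a nonzero codeword of weight $|S|$ if and only if the columns of $H$ indexed by $S$ admit a nontrivial linear dependence using exactly those $|S|$ columns.

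First I would prove: if the code has distance at least $d$, then every $d - 1$ columns of $H$ are linearly independent. Suppose for contradiction some set $S$ of $d - 1$ columns is linearly dependent, witnessed by coefficients $c_i$ ($i \in S$), not all zero, with $\sum_{i \in S} c_i H[*, i] = 0$. Extending $c$ by zeros outside $S$ yields a nonzero vector $x$ with $Hx = 0$, i.e. a nonzero codeword, of weight at most $|S| = d - 1 < d$, contradicting the distance bound. Conversely, if every $d - 1$ columns of $H$ are linearly independent, then every nonzero codeword $x$ has weight at least $d$: its support $S$ indexes a set of columns with a nontrivial dependence (coefficients $x[i]$, $i \in S$), so $|S| \geq d$ is forced, giving distance at least $d$. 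Combining the two directions, distance exactly $d$ forces both that every $d-1$ columns are independent \emph{and} that some $d$ columns are dependent; the latter follows by taking a minimum-weight nonzero codeword, which exists and has weight exactly $d$, and reading off its support as the dependent column set.

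I would then package these observations cleanly: ``distance $= d$'' $\iff$ ``distance $\geq d$ and distance $\not\geq d+1$'' $\iff$ ``(every $d-1$ columns independent) and (some $d$ columns dependent)'', where each biconditional uses the support-to-columns correspondence above applied once to the lower bound and once to the non-improvability. There is essentially no technical obstacle here; the only point requiring mild care is keeping straight that the correspondence is between a codeword's \emph{support} and a set of columns of exactly that size that is dependent \emph{via those columns} (as opposed to a smaller dependent subset), which is handled by always passing to a minimum-weight codeword when proving the ``some $d$ columns are dependent'' half. Since this is a textbook fact included only as a stepping stone toward Lemma \ref{lemma:binary-lin-indep-matrix}, I would keep the write-up to a short paragraph executing exactly the two directions above.
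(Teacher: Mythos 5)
Your proposal is correct and follows essentially the same route as the paper's proof: both use the correspondence between the support of a codeword and a linearly dependent set of columns of $H$, arguing the two directions and extracting the $d$ dependent columns from a minimum-weight codeword. Your write-up is in fact slightly more careful (the paper's version contains a typo saying ``independent'' where it means ``dependent'' columns), but there is no substantive difference in approach.
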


\begin{proof}
    Consider a code $\class$ of length $n$, dimension $k$, and distance $d$.
    Suppose there is a set of $(d - 1)$ linearly dependent columns.
    Then, there is a vector $x$ with $\wt(x) \leq d - 1$ such that $Hx = 0$, contradicting the minimum distance $d$ of $\class$.
    Since the distance of the code is $d$, there exists a vector $x$ such that $Hx = 0$ and $\wt(x) = d$.
    In particular, there exists a subset of $d$ independent columns.
    
    To prove the converse, note that the conditions on $H$ imply there exists a codeword of weight $d$ and no codeword of weight less than $d$, so that the minimum distance of the code is exactly $d$.
\end{proof}

Next, we use the fact that binary BCH codes are optimal high rate codes.
Recall that a code with parity check matrix $H$ of dimension $d \times N$ has dimension at least $N - d$.

\begin{restatable}{lemma}{binaryBCHCodes}{\cite{hocquenghem1959codes, DBLP:journals/iandc/BoseR60a}}
    \label{lemma:binary-bch-codes}
    For a length $N = 2^{m} - 1$ and a distance $s$, there exists a code $\bch{N}{s}$ with dimension at least $N - \ceil{\frac{s - 1}{2}} \log (N + 1)$.
\end{restatable}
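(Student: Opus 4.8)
The plan is to instantiate the standard narrow-sense binary BCH construction and bound the degree of its generator polynomial. Since $N = 2^m - 1$, the multiplicative group of $\F_{2^m}$ is cyclic of order $N$; fix a generator (primitive element) $\alpha$. I would define $\bch{N}{s}$ to be the cyclic code of length $N$ over $\F_2$ whose generator polynomial $g(x)$ is the least common multiple of the minimal polynomials over $\F_2$ of $\alpha, \alpha^2, \dotsc, \alpha^{s-1}$. The BCH bound then guarantees that $\bch{N}{s}$ has minimum distance at least $s$, so it only remains to lower bound its dimension; for a cyclic code of length $N$ this dimension is exactly $N - \deg g$, and hence it suffices to show $\deg g \le \ceil{\frac{s-1}{2}} \log(N+1)$.

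The first step is to recall that over $\F_2$ the minimal polynomial of any element $\beta \in \F_{2^m}$ has as its roots precisely the Frobenius conjugates $\beta, \beta^2, \beta^4, \dotsc$, so each such minimal polynomial is irreducible of degree at most $m = \log(N+1)$. The second step exploits the same fact to reduce the number of distinct factors: since squaring is the Frobenius automorphism, $\alpha^{2i}$ and $\alpha^i$ are roots of the same minimal polynomial, so the family $\set{\alpha, \alpha^2, \dotsc, \alpha^{s-1}}$ contributes at most as many distinct minimal polynomials as there are \emph{odd} integers in $\set{1, 2, \dotsc, s-1}$, namely $\ceil{\frac{s-1}{2}}$ of them. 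More carefully, I would argue that every $\alpha^j$ with $j \le s-1$ lies in the cyclotomic coset of some odd $i \le s-1$ (repeatedly halve $j$ modulo $N$ until it becomes odd, noting the halved index stays $\le s-1$ because it only shrinks).

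Combining these two observations, $g(x)$ is a least common multiple of at most $\ceil{\frac{s-1}{2}}$ distinct irreducible polynomials, each of degree at most $m$, so $\deg g \le \ceil{\frac{s-1}{2}} \cdot m = \ceil{\frac{s-1}{2}} \log(N+1)$. Therefore $\bch{N}{s}$ has dimension $N - \deg g \ge N - \ceil{\frac{s-1}{2}} \log(N+1)$, which is the claimed bound, and by the BCH bound it has the required distance $s$.

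I expect the only genuinely nontrivial ingredients to be the BCH bound itself (that designed distance $s$ forces true minimum distance at least $s$, via a Vandermonde-determinant argument on the parity-check matrix with rows $(1, \alpha^i, \alpha^{2i}, \dotsc, \alpha^{(N-1)i})$ for $i = 1, \dotsc, s-1$) and the structure of cyclotomic cosets; both are classical, so rather than reprove them I would cite a standard coding-theory reference, together with \cite{hocquenghem1959codes, DBLP:journals/iandc/BoseR60a}. Everything else is a routine count of odd residues and a degree bound on irreducible factors, so there is no real obstacle beyond assembling these standard facts in the right order.
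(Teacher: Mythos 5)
Your proof is correct, and it reaches the same bound through the dual of the paper's argument rather than the same computation. The paper works directly with the parity-check description: each constraint $c(\alpha^i)=0$ in Definition~\ref{def:binary-bch-codes} is expanded over an $\F_2$-basis of $\F_{2^m}$ into $m=\log(N+1)$ binary linear constraints, and the constraints for even exponents are then discarded as redundant because $c(\gamma)=0$ if and only if $c(\gamma^2)=0$ (Frobenius), leaving $\ceil{\frac{s-1}{2}}\log(N+1)$ parity-check rows and hence the dimension bound. You instead take the cyclic-code view: the dimension is $N-\deg g$ for the generator polynomial $g=\mathrm{lcm}$ of the minimal polynomials of $\alpha,\dotsc,\alpha^{s-1}$, and the cyclotomic-coset argument (every $\alpha^j$ with $j\le s-1$ shares its minimal polynomial with $\alpha^i$ for some odd $i\le s-1$, and each minimal polynomial has degree at most $m$) bounds $\deg g\le\ceil{\frac{s-1}{2}}\log(N+1)$. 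The two arguments are equivalent in substance --- both halve the count via squaring in characteristic $2$ --- and, like the paper, you correctly leave the designed-distance claim (the BCH bound) as a citation, since the lemma itself only asserts the dimension bound. The only practical difference is that the paper's formulation hands the later construction (Lemma~\ref{lemma:binary-lin-indep-matrix}, where $K=H^T$) an explicit parity-check matrix with exactly $\ceil{\frac{s-1}{2}}\log(N+1)$ rows, whereas from your route one would extract such an $H$ from the dimension bound afterwards; this is immediate, so there is no gap.
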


We provide the definition of BCH codes.
Recall an element $\alpha \in \F$ in a finite field is a \emph{primitive} element if it generates the multiplicative group $\F^*$.

\begin{definition}[\cite{hocquenghem1959codes, DBLP:journals/iandc/BoseR60a}]
    \label{def:binary-bch-codes}
    For length $N = 2^{m} - 1$, distance $s$, and primitive element $\alpha \in \F_{2^m}^*$, the binary BCH code is defined,
    \begin{equation*}
        \bch{N}{s} = \set{(c_0, \dotsc, c_{N - 1}) \given c(\alpha) = \dotsc = c(\alpha^{s - 1}) = 0}
    \end{equation*}
    where $c(X) = c_0 + c_1 X + \dotsc + c_{N - 1} X^{N - 1}$.
\end{definition}

The proof of Lemma \ref{lemma:binary-bch-codes} is a standard exercise.

\begin{proof}[Proof of Lemma \ref{lemma:binary-bch-codes}]
    To show a lower bound on the dimension of the code, we argue that the parity check matrix $H$ does not have too many rows.
    We begin with a weaker bound of $N - (s - 1) \log (N + 1)$.
    In particular, we show that each constraint $c$ can be written as $m = \log(N + 1)$ linear constraints.
    
    We choose a basis $\beta = \set{\beta_1 = 1, \beta_2, \dotsc, \beta_{m}}$ of $\F_2^{m}$ as a vector space.
    For any $x \in \F_2^{m}$, consider the linear map $x \mapsto \alpha x$ which can be written as $x \mapsto M_{\alpha} x$ for some matrix $M_{\alpha} \in \F_2^{m \times m}$ where $x$ is represented in the above basis.
    Then, the constraint $c(\alpha) = 0$ can be viewed as,
    \begin{equation*}
        \begin{pmatrix} c_0 \\ 0 \\ \vdots \\ 0 \end{pmatrix} + 
        M_{\alpha} \begin{pmatrix} c_1 \\ 0 \\ \vdots \\ 0 \end{pmatrix} + 
        \dotsc +
        M_{\alpha^{N - 1}} \begin{pmatrix} c_{N - 1} \\ 0 \\ \vdots \\ 0 \end{pmatrix} = 
        \begin{pmatrix} 0 \\ 0 \\ \vdots \\ 0 \end{pmatrix}
    \end{equation*}
    Thus, each constraint $c(\alpha_{i}) = 0$ can be viewed as $\log(N + 1)$ linear constraints.
    Since there are $s - 1$ such constraints, this ensures the parity check matrix has dimension $(s - 1) \log(N + 1) \times N$.

    We now prove the improved bound.
    This follows from the fact that $c(\gamma) = 0$ if and only if $c(\gamma^2) = 0$.
    In particular, $\floor{\frac{s - 1}{2}}$ constraints are redundant, leaving only $\ceil{\frac{s - 1}{2}}$ relevant constraints.

    It remains to show $c(\gamma^2) = 0$ if and only if $c(\gamma) = 0$.
    Note that $c(\gamma) = 0$ if and only if $c(\gamma)^2 = 0$.
    Furthermore, for $\alpha, \beta \in \F_2^{m}$, $(\alpha + \beta)^2 = \alpha^2 + \beta^2$.
    Then,
    \begin{align*}
        0 &= c(\gamma) = c(\gamma)^2 \\
        &= c_0^2 + (c_1 \gamma)^2 + \dotsc + (c_{N - 1} \gamma^{N - 1})^{2} \\
        &= c_0 + c_1 \gamma^2 + \dotsc c_{N - 1} \gamma^{2(N - 1)} = c(\gamma^2)
    \end{align*}
    where we have used $c_i = c_i^2$ for all coefficients $c_i \in \F_2$.
\end{proof}

We now prove Lemma \ref{lemma:binary-lin-indep-matrix} and describe the construction of the desired matrix $K$ satisfying strong linear independence constraints.

\begin{proof}
    We assume without loss of generality that $N = 2^{m} - 1$ for some $m$.
    If not, we at most double $N$ by choosing the minimum $m$ such that $2^{m} - 1 \geq N$ and take any $N$-row sub-matrix.
    
    From the construction of the code $\bch{N}{s}$, we observe that it has a parity check matrix $H$ of dimension $\ceil{\frac{s - 1}{2}} \log(N + 1) \times N$.
    Since the code has distance $s$, from Lemma \ref{lemma:binary-lin-indep-matrix}, any set of $s - 1$ rows is linearly independent.
    In particular, if $d = \ceil{\frac{s - 1}{2}} \log(N + 1)$, we have,
    \begin{align*}
        s &\geq \frac{2 d}{\log(N + 1)}
    \end{align*}
    giving the desired bound on $s - 1$.
    Thus, we choose $K = H^T$.
\end{proof}

Given the construction of the matrix $K$, we can prove the following analogues of Lemma \ref{lemma:d-rank-cache-lb} and Theorem \ref{thm:attention-i/o-lb}.
These results match the upper bound of Theorem \ref{thm:flash-attention} up to a $O(\log^2 N)$ factor.

\begin{restatable}{lemma}{dRankCacheLBBinary}
    \label{lemma:d-rank-cache-lb-binary}
    Suppose $Q, K$ are binary $N \times d$ matrices. 
    Then, the one-way communication complexity of $\matrixEntryCompression_B$ is at least $\bigOmega{\min \left( \frac{d \sqrt{B}}{\log N}, B \right)}$.
\end{restatable}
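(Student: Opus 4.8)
The plan is to follow the proof of Lemma~\ref{lemma:d-rank-cache-lb} almost verbatim, changing only two things. First, since $q = 2$ the message length is measured directly in bits, so every factor of $\log q$ collapses to $1$. Second, and more substantively, in the binary field there is no Vandermonde matrix whose every $d$ rows are linearly independent; in its place I would use the matrix $K \in \set{0,1}^{N \times d}$ furnished by Lemma~\ref{lemma:binary-lin-indep-matrix}, for which every subset of $d' := \frac{2d}{\log(N+1)} - 1 = \bigTheta{d/\log N}$ rows is linearly independent over $\F_2$. The net effect is to replace the threshold $d$ by $d'$ everywhere in the argument, which degrades the final bound by exactly the claimed $\Theta(\log N)$ factor.

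Concretely, I would reuse the notation of Lemma~\ref{lemma:d-rank-cache-lb}. Let $I \subseteq [N]^2$ with $|I| = B$ be the set of computed indices, $R_I, C_I$ its row- and column-projections, $R_i = \set{j \given (i,j) \in I}$ and $C_j = \set{i \given (i,j) \in I}$, and define the heavy rows and columns relative to the new threshold: $L_R = \set{i \given |R_i| \ge d'}$, $L_C = \set{j \given |C_j| \ge d'}$, $S_R = \set{(i,j) \in I \given i \notin L_R}$, $S_C = \set{(i,j) \in I \given j \notin L_C}$. Fix $K$ to be the matrix of Lemma~\ref{lemma:binary-lin-indep-matrix}, so every $d'$ columns of $K^T$ (equivalently, $d'$ rows of $K$) are linearly independent.

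For the counting step: given a row $i \in R_I$ with $|R_i| \le d'$, the $d \times |R_i|$ submatrix of $K^T$ whose columns are indexed by $R_i$ has full column rank, so the map $Q[i] \mapsto Q[i] \cdot (K^T \text{ restricted to columns } R_i)$ is a surjection $\F_2^d \to \F_2^{|R_i|}$; hence the $R_i$-entries of $QK^T$ can be set to any target vector, giving $2^{\min(|R_i|, d')}$ reachable value-tuples (truncate $R_i$ to size $d'$ if it is larger). Since altering $Q[i]$ affects only the $i$-th row of $QK^T$ and $K$ is fixed, these choices are independent across rows, so the entries indexed by $I$ realize at least $2^{\sum_{i \in R_I} \min(|R_i|, d')}$ distinct value-tuples; therefore an $M$-bit message forces
\[
  M \;\ge\; \sum_{i \in R_I} \min(|R_i|, d') \;=\; d'\,|L_R| + |S_R|,
\]
and symmetrically $M \ge d'\,|L_C| + |S_C|$. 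Then I would close with the same case split: if $\max(|L_R|, |L_C|) \ge \sqrt{B/2}$, one of these two inequalities gives $M \ge d'\sqrt{B/2} = \bigOmega{d\sqrt{B}/\log N}$; otherwise $|L_R \times L_C| < B/2$, so every index of $I$ outside $L_R \times L_C$ lies in $S_R \cup S_C$, whence $|S_R| + |S_C| \ge B/2$ and $M \ge B/4 = \bigOmega{B}$. The minimum of the two cases is the desired $\bigOmega{\min(d\sqrt{B}/\log N, B)}$.

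The main difficulty is purely bookkeeping: I would need to verify that the surjectivity step never secretly requires more than $d'$ independent columns of $K^T$ (it does not), and that rounding $N$ up to the nearest $2^m - 1$ in Lemma~\ref{lemma:binary-lin-indep-matrix} — which at most doubles $N$ — only changes constants and the $\log N$ factor. One degenerate regime, $d \le \log(N+1)$ (so $d' < 1$ and the bound $d'\sqrt{B/2}$ is vacuous), must be disposed of separately via the unconditional estimate $M \ge \sqrt{B}$ of Lemma~\ref{lemma:max-dim-cache-lb}, which already dominates $d\sqrt{B}/\log N$ in that range.
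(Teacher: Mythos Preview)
Your proposal is correct and follows essentially the same route as the paper: replace the Vandermonde matrix by the BCH-based matrix of Lemma~\ref{lemma:binary-lin-indep-matrix}, substitute the independence threshold $d$ by $d' = \frac{2d}{\log(N+1)} - 1$, drop the $\log q$ factor, and rerun the case split of Lemma~\ref{lemma:d-rank-cache-lb}. Your treatment is in fact slightly more careful than the paper's, since you explicitly dispose of the degenerate regime $d \le \log(N+1)$ via Lemma~\ref{lemma:max-dim-cache-lb}, whereas the paper leaves this implicit.
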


\begin{proof}
    The proof follows Lemma \ref{lemma:d-rank-cache-lb} closely, so we only point out the necessary modifications.
    Again let $M$ denote the maximum message length in the communication protocol and $I = [N]^2$ denote the indices of $QK^T$ computed with $B = |I|$.
    Define $R_I, C_I, \set{R_i}_{i = 1}^{N}, \set{C_j}_{j = 1}^{N}$ as in Lemma \ref{lemma:d-rank-cache-lb}.
    
    We modify $L_R = \set{i \given |R_i| \geq \frac{2 d}{\log(N + 1)} - 1}$ and $L_C = \set{j \given |C_j| \geq \frac{2 d}{\log(N + 1)} - 1}$ and define $S_R$ and $S_C$ as before.

    We again consider first the case $\max(|L_R|, |L_C|) \geq \sqrt{B/2}$ and assume $|L_R| \geq \sqrt{B/2}$.
    Instead of the Vandermonde matrix, we fix $K$ to be the matrix guaranteed by Lemma \ref{lemma:binary-lin-indep-matrix}.
    In particular, every subset of $\frac{2 d}{\log(N + 1)} - 1$ columns in $K^T$ is linearly independent.
    
    Following an analogous argument as Lemma \ref{lemma:d-rank-cache-lb}, we obtain the following lower bound on the message length of the communication protocol in order to unambiguously specify $B$ entries,
    \begin{align*}
        M &\geq \sum_{i \in R_I} \min\left(|R_i|, \frac{2 d}{\log(N + 1)} - 1 \right) \\
        &\geq \left( \frac{2 d}{\log(N + 1)} - 1 \right) |L_R| + |S_R| \\
        &= \bigOmega{\frac{d \sqrt{B}}{\log N}} \numberthis \label{eq:d-rank-m-lb-binary}
    \end{align*} 
    where the final inequality follows from our assumption on $L_R$.

    Finally, we consider the case $\max(|L_R|, |L_C|) < \sqrt{B/2}$.
    Again following similar arguments as Lemma \ref{lemma:d-rank-cache-lb} and Equation \ref{eq:d-rank-m-lb-binary}, we have,
    \begin{equation*}
        M \geq \max(|S_R|, |S_C|) \geq \frac{B}{4}
    \end{equation*}
    so that $M$ is at least the minimum of the two bounds.
\end{proof}

We now state the I/O complexity lower bound of attention given binary input matrices.

\binaryAttention

\begin{proof}
    Following similar arguments as Theorem \ref{thm:attention-i/o-lb}, we obtain,
    \begin{equation*}
        \bigOmega{\min \left( \frac{d \sqrt{B^*}}{\log N}, B^* \right)} \leq 2M
    \end{equation*}
    where $B^*$ is as defined in Theorem \ref{thm:i/o-compression-protocol}.
    In particular, the maximum number of entries of $QK^T$ computed in any epoch is at most,
    \begin{equation*}
        B^* = \bigO{\max \left(\frac{M^2 \log^2 N}{d^2}, M \right)}
    \end{equation*}
    
    which gives the I/O complexity lower bound,
    \begin{equation*}
        \bigOmega{\min\left( \frac{N^2 d^2}{M \log^2 N}, N^2 \right)}
    \end{equation*}
    as desired.
\end{proof}

\subsection{Small Cache: \texorpdfstring{$M = o(d^2)$}{}}

In the small cache setting, we proved an equivalence between attention and matrix multiplication in the setting where both are computed using the standard algorithm.
We do the same for algorithms using fast matrix multiplication.

Let $Q_{Att}(M)$ denote the I/O complexity of attention on a machine with cache size $M$.
Let $Q_{\mechanism(a, b, c)}(M)$ denote the I/O complexity of multiplying a $a \times b$ matrix with a $b \times c$ matrix on a machine with cache size $M$.
First, we show matrix multiplication is more expensive than attention.

\begin{restatable}{lemma}{matMultGeqAtt}
    \label{lemma:mat-mult-higher-i/o}
    For all $M$, 
    \begin{equation*}
        Q_{Att}(M) = \bigO{Q_{\mechanism(N, d, N)}(M) + Q_{\mechanism(N, N, d)}(M)}
    \end{equation*}
\end{restatable}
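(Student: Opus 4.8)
The statement to prove is that computing attention is no harder (up to constants) than performing the two rectangular matrix multiplications $\mathcal{M}(N,d,N)$ and $\mathcal{M}(N,N,d)$. I would prove this by exhibiting an algorithm for attention that makes black-box use of optimal-I/O algorithms for these two products, and showing its I/O complexity is bounded by the sum of their I/O complexities plus lower-order terms. Concretely, attention computes $O = D^{-1}AV$ where $A = \exp(QK^T)$ and $D = \diag(A\mathbf{1})$. The natural three-stage algorithm is: (i) compute $S = QK^T$ via an optimal $\mathcal{M}(N,d,N)$ routine, writing $S$ to slow memory; (ii) stream through $S$ to compute $A = \exp(S)$ entrywise and accumulate the row-sums $D$; (iii) compute $AV$ via an optimal $\mathcal{M}(N,N,d)$ routine, then scale row $i$ of the result by $1/D[i,i]$ to obtain $O$, writing $O$ to slow memory.

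\textbf{Key steps in order.} First I would fix the reduction: run the optimal $\mathcal{M}(N,d,N)$ algorithm on inputs $Q$ and $K^T$, incurring $Q_{\mathcal{M}(N,d,N)}(M)$ I/O and producing $S$ in slow memory; note that writing $S$ costs only $O(N^2)$ I/O, which is dominated since any matrix-multiplication algorithm for $\mathcal{M}(N,d,N)$ must already read its $\Theta(Nd)$-size inputs and write its $\Theta(N^2)$-size output. Second, the entrywise exponentiation and row-sum pass: load $S$ in blocks of size $\Theta(M)$, apply $\exp$, write $A$ back, and maintain partial row-sums — since a single row of $S$ has $N$ entries, if $M \geq N$ this is one streaming pass costing $O(N^2/M + N)$ I/O; if $M < N$ one must be slightly more careful about accumulating $D$, but a standard blocked accumulation (keep the $N$ running row-sums in slow memory, update in blocks) still costs $O(N^2/M + N)$, again dominated. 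Third, run the optimal $\mathcal{M}(N,N,d)$ algorithm on $A$ and $V$ to get $AV$; then a final streaming pass scales each row of $AV$ by the appropriate entry of $D^{-1}$, costing $O(Nd/M + N)$ I/O. Summing, the total is $O\bigl(Q_{\mathcal{M}(N,d,N)}(M) + Q_{\mathcal{M}(N,N,d)}(M) + N^2\bigr)$, and since $Q_{\mathcal{M}(N,d,N)}(M) = \Omega(N^2)$ (the output must be written), the $N^2$ term is absorbed, giving exactly the claimed bound.

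\textbf{Main obstacle.} The only genuinely delicate point is the middle stage — accumulating the diagonal $D$ of row-sums and later applying $D^{-1}$ — when the cache is too small to hold a whole row ($M < N$) or when $d$ is very small so that even the output of the first product dwarfs everything else. I would handle this by observing that $D$ has only $N$ entries, which can be laid out in slow memory and updated by a blocked sweep over $A$ in $O(N^2/M)$ I/O, and that the scaling of $AV$ by $D^{-1}$ touches an $N \times d$ matrix in $O(Nd/M + N)$ I/O; both are dominated by $Q_{\mathcal{M}(N,d,N)}(M) = \Omega(N^2)$. A secondary subtlety is that the two matrix-multiplication subroutines may need slightly more than $M$ cache if one is not careful about the bookkeeping for $\exp$ and the scaling; this is resolved by running the subroutines with a cache budget of $M - O(1)$ (or $M/2$), which changes the I/O cost by at most a constant factor and is therefore harmless inside the $O(\cdot)$.
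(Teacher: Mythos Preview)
Your proposal is correct and follows essentially the same approach as the paper: compute $QK^T$ via the optimal $\mathcal{M}(N,d,N)$ routine, perform the entrywise exponential and row-sum/scaling in $O(N^2)$ I/O, compute the second product via the optimal $\mathcal{M}(N,N,d)$ routine, and absorb the $O(N^2)$ term using $Q_{\mathcal{M}(N,d,N)}(M) = \Omega(N^2)$ since the output has size $N^2$. The paper's proof is terser (it simply notes the softmax step costs $O(N^2)$ I/O without the blocking discussion), but the structure and the absorption argument are identical.
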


\begin{proof}
    First, we use the given algorithm for rectangular matrix multiplication to compute $QK^T$ with $Q_{\mechanism(N, d, N)}$ I/O operations.
    Then, note that computing $\softmax(QK^T)$ can be done in $O(N^2)$ time and therefore $O(N^2)$ I/O complexity as each operation can be performed with $O(1)$ I/O operations.
    Finally, we use the given algorithm to compute $\softmax(QK^T)V$ with $Q_{\mechanism(N, N, d)}$.
    Then, the overall I/O complexity is,
    \begin{align*}
        Q_{Att}(M) &= \bigO{Q_{\mechanism(N, d, N)} + N^2 + Q_{\mechanism(N, N, d)}} \\
        &= \bigO{Q_{\mechanism(N, d, N)} + Q_{\mechanism(N, N, d)}}
    \end{align*}
    since the I/O complexity of both matrix products is at least $N^2$, as either the input or output has size $N^2$.
\end{proof}

When $M \leq d^2$, the two are equivalent.

\begin{restatable}{lemma}{attGeqMatMult}
    \label{lemma:small-cache-attention-mat-mult}
    Let $M \leq d^2$. 
    Then, $Q_{Att}(M) = \bigOmega{Q_{\mechanism(N, d, N)}(M)}$.
\end{restatable}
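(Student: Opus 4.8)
The plan is to reduce $N \times d$ by $d \times N$ matrix multiplication to attention, mirroring the proof of Theorem~\ref{thm:i/o-lb-small-mem} but now for arbitrary (possibly fast) matrix multiplication algorithms. Given inputs $A \in \F_q^{N \times d}$, $B \in \F_q^{d \times N}$ to the matrix multiplication problem, set $Q = A$, $K = B^T$ (so that $QK^T = AB$), and let $V$ be arbitrary, say the all-zeros matrix. Let $\innerAlg$ be an optimal algorithm for attention; by the standing assumption of Section~\ref{sec:i/o-attention-fmm}, $\innerAlg$ computes every entry of $QK^T$ in cache at some point during its execution. I would then form $\outerAlg$ by modifying $\innerAlg$: whenever an entry of $QK^T$ appears in cache for the first time, immediately write it to memory. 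Such a write neither changes the cache contents nor increases the number of cached field elements, so $\outerAlg$ is still a valid execution on a machine with cache size $M$, and it computes $AB$ and writes it to memory. Since $QK^T$ has exactly $N^2$ entries, $\outerAlg$ incurs at most $N^2$ extra I/O operations, giving
\begin{equation*}
    Q_{\mechanism(N, d, N)}(M) \;\leq\; Q_{Att}(M) + N^2 .
\end{equation*}

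Next I would absorb the additive $N^2$ term into $Q_{Att}(M)$. Since $M \leq d^2$, we have $\frac{N^2 d^2}{M} \geq N^2$, so Theorem~\ref{thm:attention-i/o-lb} gives $Q_{Att}(M) = \bigOmega{\min\bigl(\tfrac{N^2 d^2}{M}, N^2\bigr)} = \bigOmega{N^2}$. Hence $N^2 = \bigO{Q_{Att}(M)}$, and substituting into the displayed inequality yields $Q_{\mechanism(N, d, N)}(M) = \bigO{Q_{Att}(M)}$, which is exactly $Q_{Att}(M) = \bigOmega{Q_{\mechanism(N, d, N)}(M)}$.

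The only genuinely delicate point is the first step: we must be able to intercept the entries of $QK^T$ as they are produced, which is precisely the standing assumption that every attention algorithm computes $QK^T$ explicitly; granting this, the reduction is immediate and independent of how $\innerAlg$ performs its multiplications (combinatorial or fast), and it does not run into the obstacle from Section~\ref{sec:standard-mm-i/o-complexity} of needing a specific computational DAG. A secondary caveat is that the $\bigOmega{N^2}$ lower bound on $Q_{Att}(M)$ invokes Theorem~\ref{thm:attention-i/o-lb}, which assumes a large field ($q > N$); for binary inputs one instead applies Theorem~\ref{thm:attention-i/o-lb-binary} and obtains $Q_{Att}(M) = \bigOmega{N^2 / \log^2 N}$, so the conclusion weakens to $Q_{Att}(M) = \bigOmega{Q_{\mechanism(N, d, N)}(M) / \log^2 N}$.
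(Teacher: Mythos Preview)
Your proposal is correct and follows essentially the same argument as the paper's proof: simulate an optimal attention algorithm while writing each entry of $QK^T$ to memory the first time it is produced, incurring at most $N^2$ extra I/O's, and then invoke Theorem~\ref{thm:attention-i/o-lb} (which gives $Q_{Att}(M)=\Omega(N^2)$ when $M\le d^2$) to absorb the additive $N^2$. Your additional remark about the binary case and the $\log^2 N$ loss is a valid caveat that the paper does not spell out.
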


\begin{proof}
    First, consider any two input matrices $Q, K^T$.
    We simulate the attention algorithm with the following modification: whenever an entry $(QK^T)_{ij}$ is computed for the first time, we write this entry to memory.
    Our modified algorithm successfully computes the matrix product $QK^T$ using at most $\bigO{Q_{Att}(M) + N^2}$ I/O operations.
    From Theorem \ref{thm:attention-i/o-lb}, we have that whenever $M \leq d^2$, $Q_{Att}(M) = \Omega(N^2)$.
    As a result, we compute Attention with $\bigO{Q_{Att}(M)}$ I/O complexity.
\end{proof}

As a corollary, we obtain the following equivalence in the small cache setting.

\begin{theorem}
    \label{thm:small-cache-fmm-equiv}
    Let $M \leq d^2$.
    Then, 
    \begin{align*}
        Q_{Att}(M) &= \bigOmega{Q_{\mechanism(N, d, N)}(M)} \\
        Q_{Att}(M) &= \bigO{Q_{\mechanism(N, d, N)}(M) + Q_{\mechanism(N, N, d)}(M)}
    \end{align*}
\end{theorem}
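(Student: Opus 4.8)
The plan is to read off \Cref{thm:small-cache-fmm-equiv} directly from the two preceding lemmas, one per direction. For the upper bound $Q_{Att}(M) = \bigO{Q_{\mechanism(N, d, N)}(M) + Q_{\mechanism(N, N, d)}(M)}$ I would simply invoke \Cref{lemma:mat-mult-higher-i/o}, which holds for all $M$ and in particular for $M \leq d^2$: the attention computation is the composition of the rectangular product $QK^T$ (an $N \times d$ by $d \times N$ multiply), an $O(N^2)$-time entrywise-exponential-and-row-sum step, and the rectangular product $\softmax(QK^T) V$ (an $N \times N$ by $N \times d$ multiply); the middle step costs only $O(N^2)$ I/O operations, which is dominated since each matrix product already has an input or output of size $N^2$ and hence costs $\bigOmega{N^2}$ I/O.

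For the lower bound $Q_{Att}(M) = \bigOmega{Q_{\mechanism(N, d, N)}(M)}$ I would invoke \Cref{lemma:small-cache-attention-mat-mult}: given any algorithm for attention, modify it so that whenever an entry $(QK^T)_{ij}$ is computed for the first time it is immediately flushed to slow memory. This yields a correct algorithm for the product $QK^T$ whose I/O cost is at most $\bigO{Q_{Att}(M) + N^2}$, and here is where the small-cache hypothesis enters: \Cref{thm:attention-i/o-lb} gives $Q_{Att}(M) = \bigOmega{N^2}$ whenever $M \leq d^2$, so the additional $N^2$ writes are absorbed and $Q_{\mechanism(N, d, N)}(M) = \bigO{Q_{Att}(M)}$.

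Combining the two bounds gives the theorem. Since both component lemmas are already established, there is no real technical obstacle; the only point to be careful about is that the lower-bound direction genuinely needs $M \leq d^2$ (to make $Q_{Att}(M) = \bigOmega{N^2}$ and thus swallow the extra $N^2$ I/Os introduced by the reduction), whereas the upper-bound direction needs no assumption on $M$ at all.
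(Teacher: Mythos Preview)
Your proposal is correct and matches the paper's approach exactly: the theorem is stated as an immediate corollary of \Cref{lemma:mat-mult-higher-i/o} (upper bound, valid for all $M$) and \Cref{lemma:small-cache-attention-mat-mult} (lower bound, using $M \leq d^2$), with no additional argument.
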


\section{Conclusion}

We have established tight I/O complexity lower bounds for attention.
Our lower bound in fact holds for any algorithm computing matrix product $QK^T$.
We give a tight characterization of the I/O complexity of algorithms using standard matrix multiplication, answering an open question of \cite{DBLP:conf/nips/DaoFERR22}.

Furthermore, in the regime of practical interest, where cache size $M \geq d^2$ is large, we extend our lower bound to algorithms using fast matrix multiplication, showing that FlashAttention is optimal even when fast matrix multiplication is allowed.
Furthermore, we establish a connection between communication complexity and I/O complexity, which may be of independent theoretical interest.

We leave the problem of establishing tight I/O complexity bounds for attention in the small cache regime ($M \leq d^2$) (equivalently rectangular matrix multiplication) as an interesting open problem.

\section*{Acknowledgements}

We would like to thank Arya Mazumdar and Harry Sha for helpful discussions.

% \newpage
\bibliographystyle{alpha}
\bibliography{references}

\newcommand{\etalchar}[1]{$^{#1}$}
\begin{thebibliography}{CDW{\etalchar{+}}21}

\bibitem[AM98]{DBLP:conf/dimacs/ArgeM98}
Lars Arge and Peter~Bro Miltersen.
\newblock On showing lower bounds for external-memory computational geometry problems.
\newblock In {\em External Memory Algorithms, Proceedings of a {DIMACS} Workshop}, 1998.

\bibitem[AS23]{DBLP:journals/corr/BoundedEntryAttention}
Josh Alman and Zhao Song.
\newblock Fast attention requires bounded entries.
\newblock {\em CoRR}, abs/2302.13214, 2023.

\bibitem[AV88]{DBLP:journals/cacm/AggarwalV88}
Alok Aggarwal and Jeffrey~Scott Vitter.
\newblock The input/output complexity of sorting and related problems.
\newblock {\em Commun. {ACM}}, 1988.

\bibitem[BDH{\etalchar{+}}12]{DBLP:conf/medalg/BallardDHLS12}
Grey Ballard, James Demmel, Olga Holtz, Benjamin Lipshitz, and Oded Schwartz.
\newblock Graph expansion analysis for communication costs of fast rectangular matrix multiplication.
\newblock In {\em Mediterranean Conference on Algorithms {MedAlg}}, 2012.

\bibitem[BDHS12]{DBLP:journals/jacm/BallardDHS12}
Grey Ballard, James Demmel, Olga Holtz, and Oded Schwartz.
\newblock Graph expansion and communication costs of fast matrix multiplication.
\newblock {\em J. {ACM}}, 59(6):32:1--32:23, 2012.

\bibitem[BMR{\etalchar{+}}20]{DBLP:conf/nips/BrownMRSKDNSSAA20}
Tom~B. Brown, Benjamin Mann, Nick Ryder, Melanie Subbiah, Jared Kaplan, Prafulla Dhariwal, Arvind Neelakantan, Pranav Shyam, Girish Sastry, Amanda Askell, Sandhini Agarwal, Ariel Herbert{-}Voss, Gretchen Krueger, Tom Henighan, Rewon Child, Aditya Ramesh, Daniel~M. Ziegler, Jeffrey Wu, Clemens Winter, Christopher Hesse, Mark Chen, Eric Sigler, Mateusz Litwin, Scott Gray, Benjamin Chess, Jack Clark, Christopher Berner, Sam McCandlish, Alec Radford, Ilya Sutskever, and Dario Amodei.
\newblock Language models are few-shot learners.
\newblock In {\em Neural Information Processing Systems, {NeurIPS}}, 2020.

\bibitem[BR60]{DBLP:journals/iandc/BoseR60a}
R.~C. Bose and Dwijendra~K. Ray{-}Chaudhuri.
\newblock On {A} class of error correcting binary group codes.
\newblock {\em Inf. Control.}, 3(1):68--79, 1960.

\bibitem[BS17]{DBLP:conf/wads/BilardiS17}
Gianfranco Bilardi and Lorenzo~De Stefani.
\newblock The {I/O} complexity of strassen's matrix multiplication with recomputation.
\newblock In {\em Algorithms and Data Structures {WADS}}, 2017.

\bibitem[CDW{\etalchar{+}}21]{DBLP:conf/nips/ChenDWSRR21}
Beidi Chen, Tri Dao, Eric Winsor, Zhao Song, Atri Rudra, and Christopher R{\'{e}}.
\newblock Scatterbrain: Unifying sparse and low-rank attention.
\newblock In {\em Neural Information Processing Systems, {NeurIPS}}, 2021.

\bibitem[CLD{\etalchar{+}}21]{DBLP:conf/iclr/ChoromanskiLDSG21}
Krzysztof~Marcin Choromanski, Valerii Likhosherstov, David Dohan, Xingyou Song, Andreea Gane, Tam{\'{a}}s Sarl{\'{o}}s, Peter Hawkins, Jared~Quincy Davis, Afroz Mohiuddin, Lukasz Kaiser, David~Benjamin Belanger, Lucy~J. Colwell, and Adrian Weller.
\newblock Rethinking attention with performers.
\newblock In {\em International Conference on Learning Representations, {ICLR}}, 2021.

\bibitem[CP23]{DBLP:conf/focs/ChenP23}
Xi~Chen and Binghui Peng.
\newblock Memory-query tradeoffs for randomized convex optimization.
\newblock In {\em Symposium on Foundations of Computer Science, {FOCS}}, 2023.

\bibitem[CPP22]{DBLP:conf/focs/ChenPP22}
Xi~Chen, Christos~H. Papadimitriou, and Binghui Peng.
\newblock Memory bounds for continual learning.
\newblock In {\em Symposium on Foundations of Computer Science, {FOCS}}, 2022.

\bibitem[Dao23]{DBLP:journals/corr/abs-2307-08691}
Tri Dao.
\newblock Flashattention-2: Faster attention with better parallelism and work partitioning.
\newblock {\em CoRR}, abs/2307.08691, 2023.

\bibitem[DFE{\etalchar{+}}22]{DBLP:conf/nips/DaoFERR22}
Tri Dao, Daniel~Y. Fu, Stefano Ermon, Atri Rudra, and Christopher R{\'{e}}.
\newblock Flashattention: Fast and memory-efficient exact attention with io-awareness.
\newblock In {\em Neural Information Processing Systems{NeurIPS}}, 2022.

\bibitem[DKS19]{DBLP:conf/colt/DaganKS19}
Yuval Dagan, Gil Kur, and Ohad Shamir.
\newblock Space lower bounds for linear prediction in the streaming model.
\newblock In {\em Conference on Learning Theory, {COLT}}, 2019.

\bibitem[GLM20]{DBLP:conf/nips/GonenLM20}
Alon Gonen, Shachar Lovett, and Michal Moshkovitz.
\newblock Towards a combinatorial characterization of bounded-memory learning.
\newblock In {\em Neural Information Processing Systems, {NeurIPS}}, 2020.

\bibitem[GU18]{DBLP:conf/soda/GallU18}
Francois~Le Gall and Florent Urrutia.
\newblock Improved rectangular matrix multiplication using powers of the coppersmith-winograd tensor.
\newblock In Artur Czumaj, editor, {\em Proceedings of the Twenty-Ninth Annual {ACM-SIAM} Symposium on Discrete Algorithms, {SODA} 2018, New Orleans, LA, USA, January 7-10, 2018}, pages 1029--1046. {SIAM}, 2018.

\bibitem[HJK{\etalchar{+}}23]{DBLP:journals/corr/HyperAttention}
Insu Han, Rajesh Jayaram, Amin Karbasi, Vahab Mirrokni, David~P. Woodruff, and Amir Zandieh.
\newblock Hyperattention: Long-context attention in near-linear time.
\newblock {\em CoRR}, abs/2310.05869, 2023.

\bibitem[HK81]{redblue1981}
Jia-Wei Hong and Hsiang-Tsung Kung.
\newblock I/o complexity: The red-blue pebble game.
\newblock In {\em Symposium on Theory of Computing {STOC}}, 1981.

\bibitem[Hoc59]{hocquenghem1959codes}
Alexis Hocquenghem.
\newblock Codes correcteurs d'erreurs.
\newblock {\em Chiffers}, 2:147--156, 1959.

\bibitem[KKL20]{DBLP:conf/iclr/KitaevKL20}
Nikita Kitaev, Lukasz Kaiser, and Anselm Levskaya.
\newblock Reformer: The efficient transformer.
\newblock In {\em International Conference on Learning Representations, {ICLR}}, 2020.

\bibitem[KLMY19]{DBLP:conf/colt/KaneLMY19}
Daniel Kane, Roi Livni, Shay Moran, and Amir Yehudayoff.
\newblock On communication complexity of classification problems.
\newblock In {\em Conference on Learning Theory, {COLT}}, 2019.

\bibitem[KVPF20]{DBLP:conf/icml/KatharopoulosV020}
Angelos Katharopoulos, Apoorv Vyas, Nikolaos Pappas, and Fran{\c{c}}ois Fleuret.
\newblock Transformers are rnns: Fast autoregressive transformers with linear attention.
\newblock In {\em International Conference on Machine Learning, {ICML}}, 2020.

\bibitem[MSSV22]{DBLP:conf/colt/MarsdenSSV22}
Annie Marsden, Vatsal Sharan, Aaron Sidford, and Gregory Valiant.
\newblock Efficient convex optimization requires superlinear memory.
\newblock In {\em Conference on Learning Theory, {COLT}}, 2022.

\bibitem[PR23]{DBLP:conf/focs/PengR23}
Binghui Peng and Aviad Rubinstein.
\newblock Near optimal memory-regret tradeoff for online learning.
\newblock In {\em Symposium on Foundations of Computer Science, {FOCS}}, 2023.

\bibitem[PS14a]{DBLP:conf/pods/PaghS14}
Rasmus Pagh and Francesco Silvestri.
\newblock The input/output complexity of triangle enumeration.
\newblock In {\em Principles of Database Systems {PODS}}, 2014.

\bibitem[PS14b]{DBLP:conf/esa/PaghS14}
Rasmus Pagh and Morten St{\"{o}}ckel.
\newblock The input/output complexity of sparse matrix multiplication.
\newblock In {\em European Symposium on Algorithms {ESA}}, 2014.

\bibitem[PZ23]{DBLP:conf/soda/PengZ23}
Binghui Peng and Fred Zhang.
\newblock Online prediction in sub-linear space.
\newblock In {\em Symposium on Discrete Algorithms, {SODA}}, 2023.

\bibitem[Raz19]{DBLP:journals/jacm/Raz19}
Ran Raz.
\newblock Fast learning requires good memory: {A} time-space lower bound for parity learning.
\newblock {\em J. {ACM}}, 66(1):3:1--3:18, 2019.

\bibitem[SHS15]{DBLP:conf/spaa/ScottHS15}
Jacob Scott, Olga Holtz, and Oded Schwartz.
\newblock Matrix multiplication i/o-complexity by path routing.
\newblock In {\em Symposium on Parallelism in Algorithms and Architectures {SPAA}}, 2015.

\bibitem[SSV19]{DBLP:conf/stoc/SharanSV19}
Vatsal Sharan, Aaron Sidford, and Gregory Valiant.
\newblock Memory-sample tradeoffs for linear regression with small error.
\newblock In {\em Symposium on Theory of Computing, {STOC}}, 2019.

\bibitem[SWXZ22]{DBLP:conf/stoc/SrinivasWXZ22}
Vaidehi Srinivas, David~P. Woodruff, Ziyu Xu, and Samson Zhou.
\newblock Memory bounds for the experts problem.
\newblock In {\em Symposium on Theory of Computing, {STOC}}, 2022.

\bibitem[VSP{\etalchar{+}}17]{DBLP:conf/nips/VaswaniSPUJGKP17}
Ashish Vaswani, Noam Shazeer, Niki Parmar, Jakob Uszkoreit, Llion Jones, Aidan~N. Gomez, Lukasz Kaiser, and Illia Polosukhin.
\newblock Attention is all you need.
\newblock In {\em Neural Information Processing Systems {NeurIPS}}, 2017.

\bibitem[WS19]{DBLP:conf/colt/WoodworthS19}
Blake~E. Woodworth and Nathan Srebro.
\newblock Open problem: The oracle complexity of convex optimization with limited memory.
\newblock In {\em Conference on Learning Theory, {COLT}}, 2019.

\bibitem[WXXZ23]{DBLP:journals/corr/FMM}
Virginia~Vassilevska Williams, Yinzhan Xu, Zixuan Xu, and Renfei Zhou.
\newblock New bounds for matrix multiplication: from alpha to omega.
\newblock {\em CoRR}, abs/2307.07970, 2023.

\bibitem[ZGD{\etalchar{+}}20]{DBLP:conf/nips/ZaheerGDAAOPRWY20}
Manzil Zaheer, Guru Guruganesh, Kumar~Avinava Dubey, Joshua Ainslie, Chris Alberti, Santiago Onta{\~{n}}{\'{o}}n, Philip Pham, Anirudh Ravula, Qifan Wang, Li~Yang, and Amr Ahmed.
\newblock Big bird: Transformers for longer sequences.
\newblock In {\em Neural Information Processing Systems, {NeurIPS}}, 2020.

\end{thebibliography}

\end{document}